% This must be in the first 5 lines to tell arXiv to use pdfLaTeX, which is strongly recommended.
\pdfoutput=1
% In particular, the hyperref package requires pdfLaTeX in order to break URLs across lines.

\documentclass[11pt]{article}

% Change "review" to "final" to generate the final (sometimes called camera-ready) version.
% Change to "preprint" to generate a non-anonymous version with page numbers.
\usepackage[preprint]{acl}

% Standard package includes
\usepackage{times}
\usepackage{latexsym}
\usepackage{algorithm}
\usepackage{algorithmic}
\usepackage{graphicx}

% For proper rendering and hyphenation of words containing Latin characters (including in bib files)
\usepackage[T1]{fontenc}
% For Vietnamese characters
% \usepackage[T5]{fontenc}
% See https://www.latex-project.org/help/documentation/encguide.pdf for other character sets

% This assumes your files are encoded as UTF8
\usepackage[utf8]{inputenc}

% This is not strictly necessary, and may be commented out,
% but it will improve the layout of the manuscript,
% and will typically save some space.
\usepackage{microtype}
\usepackage{enumitem}

% This is also not strictly necessary, and may be commented out.
% However, it will improve the aesthetics of text in
% the typewriter font.
\usepackage{inconsolata}

% If the title and author information does not fit in the area allocated, uncomment the following
%
%\setlength\titlebox{<dim>}
%
% and set <dim> to something 5cm or larger.

\usepackage{microtype}
\usepackage{hyperref}
\usepackage{url}
\usepackage{booktabs}
\definecolor{darkblue}{rgb}{0, 0, 0.5}
\hypersetup{colorlinks=true, citecolor=darkblue, linkcolor=darkblue, urlcolor=darkblue}

\usepackage{graphicx}
\usepackage{subfigure}
\usepackage{booktabs} % for professional tables
\usepackage{amsmath}
\usepackage{amssymb}
\usepackage{mathtools}
\usepackage{amsthm}
\usepackage{amssymb}
\usepackage{comment}
\usepackage{multirow}
\usepackage{wrapfig}
\usepackage{algorithm}
\usepackage{algorithmic}
\usepackage{subcaption}

\title{An Examination on the Effectiveness of Divide-and-Conquer Prompting in Large Language Models}

\newtheorem{definition}{Definition}

\newtheorem{theorem}{Theorem}[section]
\newtheorem{lemma}[theorem]{Lemma}

\newtheorem{prop}[theorem]{Proposition}

\newcommand{\mat}[1]{{\bf #1}}

% Authors must not appear in the submitted version. They should be hidden
% as long as the \colmfinalcopy macro remains commented out below.
% Non-anonymous submissions will be rejected without review.

\author{Yizhou Zhang$^1$, Lun Du$^2$, Defu Cao$^1$, Qiang Fu$^3$, Yan Liu$^1$  \\
          $^1$ University of Southern California; 
          $^2$ Coupang, Inc;
          $^3$ Microsoft Research, Asia\\
          \texttt{\{zhangyiz, defucao, yanliu.cs\}@usc.edu} \\
          \texttt{dulun2834@126.com}, 
          \texttt{qifu@microsoft.com}
          }

% The \author macro works with any number of authors. There are two commands
% used to separate the names and addresses of multiple authors: \And and \AND.
%
% Using \And between authors leaves it to \LaTeX{} to determine where to break
% the lines. Using \AND forces a linebreak at that point. So, if \LaTeX{}
% puts 3 of 4 authors names on the first line, and the last on the second
% line, try using \AND instead of \And before the third author name.

%\newcommand{\fix}{\marginpar{FIX}}
%\newcommand{\new}{\marginpar{NEW}}

% \colmfinalcopy % Uncomment for camera-ready version, but NOT for submission.
\begin{document}

\maketitle

\begin{abstract}
Foundation models, such as Large language Models (LLMs),  have attracted significant amount of interest due to their large number of applications. However, when handling tasks involving repetitive sub-tasks and/or deceptive contents, such as arithmetic calculation and article-level fake news detection, simple instructional prompts suffer from inaccurate responses. Existing works show that more complicated prompting strategies, such as Chain-of-Thoughts and Least-to-Most, can unlock LLM's powerful capacity in diverse areas. Recent researches reveal that simple divide-and-conquer prompting strategy, i.e. simply dividing the input sequence to multiple sub-inputs, can also substantially improve LLM's performance in some specific tasks such as misinformation detection. 
In this paper, we aim at examining the utility of divide-and-conquer prompting strategy and answer on which kind of tasks this strategy gets advantages. Specifically, we provide a theoretic analysis to  divide-and-conquer prompting strategy and help us identify the specific tasks where DaC prompting can bring performance boost with theoretic guarantee.
We then present two cases (\textbf{large integer arithmetic and fact verification}) where experimental results aligns with our theoretic analysis. 
\end{abstract}

\section{Introduction}
Large language models (LLM) based on the Transformer architecture have led to major breakthroughs in natural language processing and other related fields in artificial intelligence\citep{brown2020language,radford2019language,touvron2023llama}. State-of-the-art general-purpose language models have demonstrated remarkable advancements in various domains, including question answering, graph learning, reading comprehension, text generation, and machine translation \citep{chen2023exploring,tan2023evaluation,hendy2023good,mao2023gpteval,zong2023solving}. These developments paves the way towards general-purpose problem solvers \citep{bubeck2023sparks}. 

However, as pointed out in \citep{wei2022chain}, significant challenges arise when scale-up models are applied to tasks involved with long solution paths, such as those requiring mathematical or knowledge reasoning. A series theoretic works attribute this challenge to \textbf{Parallelism Tradeoff} \citep{merrill-sabharwal-2023-parallelism}, a fundamental limitation of Transformers. Specifically, unlike Recurrent Neural Network whose computational depth is linear to the input sequence length (i.e., the depth is $O(n)$, where $n$ is the input sequence length), Transformer does not contain any recurrent structure. Such design, while achieving superior parallelizability than RNN, makes Transformers suffer from limited expressive power. \citeauthor{merrill-sabharwal-2023-parallelism} proved that the expressive power of fixed-depth log-precision Transformer, which is very close to the most commonly applied Transformer architecture for LLMs, is bounded by constant-depth logspace-uniform threshold circuits. Thus, they fail to accurately tackle the tasks requiring long solution paths. 
\begin{figure*}
    \centering
    \includegraphics[width=\linewidth]{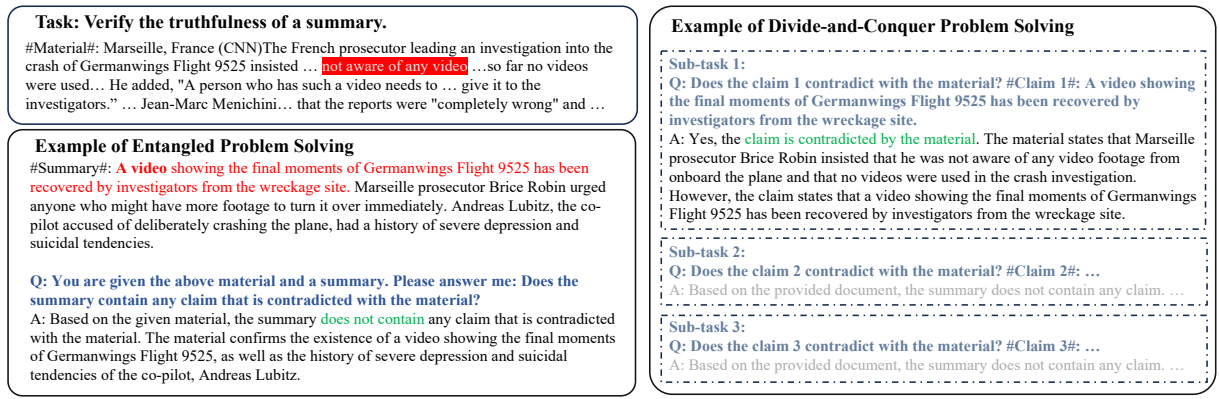}
    \caption{An illustrative example of hallucination detection with entangled problem solving (i.e., directly forward all inputs into the LLM) and divide-and-conquer problem solving (i.e., divide the problem inputs to parallel sub-tasks and tackle them parallelly). The sentence marked with red back font in the material is the evidence that contradict with the first claim in summary (marked with red font).}
    \label{fig:illustration}
    \vspace{-.25in}
\end{figure*}

To address this challenge, carefully designed prompting strategies have been developed to tackle tasks that requires stronger expressive power \citep{feng2023towards}. A series of works focus on prompting the LLM with instructions or context samples to output the intermediate steps that derive the final answer in an autoregressive manner, such as Chain-of-Thoughts (CoT) \citep{wei2022chain,wang2022self,zhou2022least,chen2023hallucination}. Some works further apply programs to guide LLM to strictly follow designated reasoning steps \citep{yao2023tree}. Theoretically, these prompting strategies convert the role of Transformer from the complete problem solver to a sub-problem solver in a dynamic programming or tree searching algorithm \citep{merrill2023expresssive}. In this way, these prompting strategies expand the expressive power of the LLMs and successfully improve the reasoning and searching of LLMs \citep{feng2023towards}. 

In contrast to such methods that apply instruction, context sample or program to decompose the whole reasoning process to multiple intermediate steps, in some tasks, researchers report that LLM's performance can also be boosted by simply \textbf{dividing the input sequences to multiple sub-inputs} and then merge the responses from LLMs on all sub-inputs, as shown in Fig. \ref{fig:illustration}. For example, \citeauthor{cui2023divide} propose that in automated evaluation, LLM's performance can be further boosted by first dividing the input text to sentences and then evaluating them one by one.
Intuitively, this paradigm benefits the tasks in a way similar to human brains, especially when the tasks are too hard or too complex. For example, when reviewing a long academic paper, some reviewers produce low-quality reviews \citep{garcia2021quality,tennant2020limitations,cortes2021inconsistency} containing hallucination-like \textbf{intermediate errors}, such as pointing out some `missing baselines' that have already been sufficiently discussed by authors. To avoid such mistakes, experienced reviewers usually think slowly \citep{kahneman2011thinking} to follow a \textbf{Divide-and-Conquer} paradigm to handle this task. Specifically, they decompose the paper review as examinations of multiple central opinions and then retrieve corpus to verify them respectively. 

However, unlike Chain-of-Thoughts whose advance in expressive power is supported by theoretic analysis \citep{feng2023towards}, the performance boost from Divide-and-Conquer paradigm is lack of rigorous theoretic support. As a result, we are not aware of the conditions under which the Divide-and-Conquer paradigm can acquire more accurate answers. To tackle this challenge, in this paper, we aim at understanding the utility of DaC prompting. More specifically, we attempt to answer the following two research questions:
\begin{enumerate}
    \item \textbf{RQ1: Compared to straightforward instructional prompting, does DaC have theoretically guaranteed advantages similar as CoT and its variants?}
    \item \textbf{RQ2: Compared CoT and its variants, what utility and limitations does DaC have?}
\end{enumerate}
To answer these questions, we first provide a theoretic paradigm that can help us analyze how divide-and-conquer strategy expand the expressive power of fixed-depth log-precision Transformer on a given task. In this way, we provide a framework that can provide theoretic guarantee to DaC paradigm in various tasks. In this way, we present some conditions under which DaC have advantages compared to other prompting strategies. 
We then empirically evaluate DaC prompting and representative baselines on tasks that satisfy the proposed conditions and are challenging to existing prompting strategies even on state-of-the-art LLMs: Large Integer Multiplication, Hallucination Detection, Article-level Fact Verification \citep{cheng-zhang-2023-analyzing,li2023halueval,wadden-etal-2020-fact,hu2023bad,wu2023ragtruth}. These tasks either require very long reasoning paths (e.g. large integer multiplication) or contain deceptive contents (e.g. hallucination detection and fact verification), making existing methods like Chain-of-Thought prompting prone to intermediate errors. Our experimental results show that the proposed method outperforms the baselines on all three tasks, which supports our theoretic analysis. 
\begin{figure*}[htb]
    \centering
    \includegraphics[width=\linewidth]{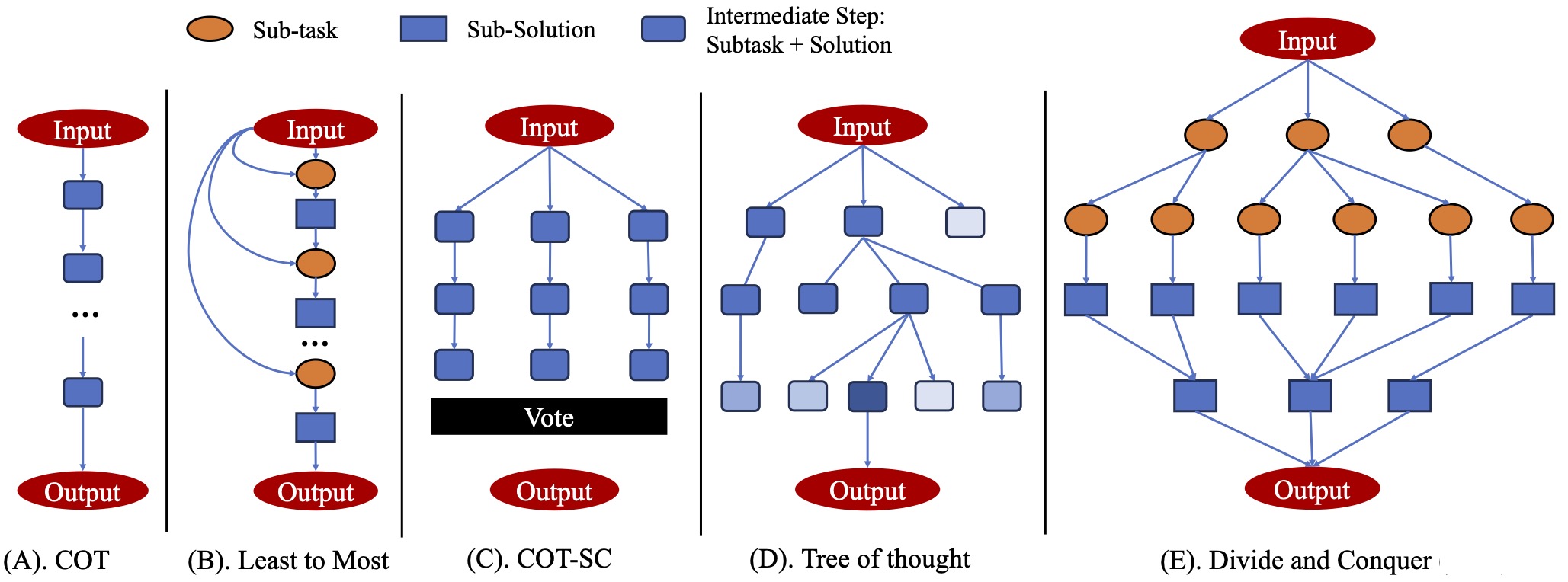}
    \caption{The comparison between DaC and the existing methods for prompting. The ellipse marks represent sub-tasks, the right-angled rectangles represent sub-task solutions, and the rounded rectangles represent intermediate steps that entangle sub-task and sub-solutions. The different shades in Tree of Thoughts (subfigure D) indicate the rates of different search directions. In CoT (Chain-of-Thoughts), CoT-SC and ToT, the Large Language Models must simultaneously generating and resolving sub-tasks. Least-to-Most (also Decomposed Prompting) disentangle sub-task generation and resolution. However, its sub-task resolution and resolution assembly process are intertwined as it sequentially attach new sub-tasks onto the previous resolution. Different from them, DaC totally disentangle the sub-task generation, sub-task resolution, and resolution assembly process.}
    \label{fig:compare}
\end{figure*}

\section{Related Work}
\subsection{Expressive Power of Transformer}
As discussed in previous works \citep{merrill-sabharwal-2023-parallelism,feng2023towards}, the expressive power of fixed-length log-precision transformers, which are widely applied in modern Pre-trained Large Language Models, is actually much more limited than people's expects. \citeauthor{merrill-sabharwal-2023-parallelism} give a theoretic proof that the expressive power of fixed-length log-precision transformers is upper-bounded with ${\sf TC^0}$. 
\citeauthor{feng2023towards} further extend their analysis to explain that a lot of common problems exceed the expressive power of fixed-length log-precision transformers. Such results explains why the powerful LLM may make some ridiculous mistakes and how CoT improve the performance.

\subsection{Prompting Strategies of LLM}
In this sub-section, we introduce the existing prompting and discuss their limitations and drawbacks. Following the notations in \citep{yao2023tree}, we denote the Large Language Models with parameter $\theta$ as $p_\theta$ and use lower case letters $x,y,z$ to denote input sequence, result, and intermediate steps, respectively. 

\textbf{Input-Output (IO) Prompting} is the standard prompting strategy that attach input $x$ with instructions and/or few-shot in-context-learning examples to aqcuaire a prompt, denoted as ${\sf prompt}(x)$ \citep{yao2023tree}. The LLM takes ${\sf prompt}(x)$ as input and predict result, i.e. $y \sim p_\theta(y|{\sf prompt}(x))$. 

\textbf{Chain-of-Thought (CoT) Prompting} \citep{wei2022chain} aims at simulating human's thinking process that handles complicated task (e.g. combinational reasoning and mathematical calculation) in a step-by-step manner. More specifically, the LLM is guided to output a series of intermediate steps $z_1,z_2,...,z_n$ (also known as \textit{thoughts}) autoregressively, i.e. $z_i\sim p_\theta(z_i|{\sf prompt}(x),z_1,...,z_{i-1})$. Then the LLM output the prediction of result $y$ based on the \textit{thoughts}, i.e. $y\sim p_\theta(z_i|{\sf prompt}(x),z_1,...,z_n)$. 

\textbf{Exploration-of-Thought (EoT) Prompting} and \textbf{Program-guided Prompting} are two variants of CoT. EoT includes a series of CoT's variants, such as Self-consistency with CoT (CoT-SC) prompting \citep{wang2022self} and Tree-of-Thoughts (ToT) prompting \citep{yao2023tree}, which aim at addressing the limitation of CoT in exploration. Their common central idea is to generate multiple chains of thought through sampling or proposing prompting and then ensemble them to acquire a final prediction. 
Program-guided Prompting aims at controlling the LLM's generation process with symbolic programs or pre-defined procedure \citep{zhu2022solving,jung2022maieutic,zhou2022least,khot2022decomposed,creswell2022faithful,gao2023pal}. Among them, the Least-to-Most (LtM) Prompting \citep{zhou2022least} and Decomposed Prompting \citep{khot2022decomposed} are close to this work. They are the earliest attempts that explicitly prompt the LLM to decompose the task as a series of sub-tasks and sequentially tackle them. LtM prompt a LLM to iteratively raise sub-tasks and sequentially solve them to acquire the final resolution. Decomposed Prompting can regarded as a upgraded version of LtM. It introduces special notations into the prompt to represent program states and thus can call itself (i.e., recursion) or other modules (i.e., hierarchical decomposition), endowing it stronger expressive power. Such design increased the compositional generalization ability of LLMs in different areas, such as symbolic manipulation and multi-hop QA \citep{khot2022decomposed}.

The aforementioned CoT and EoT families incorporate LLM with stronger expressive power than IO prompting. However, a critical issue of them is that, they could miss or ignore some important intermediate steps or contents \citep{liu2023trustworthy}. 
This problem is even worse when we are handling tasks involved with long input (e.g. long documents and large numbers).
Typical examples include large number arithmetic calculation and fact verification in long documents. Compared to them, Least-to-Most prompting and Decomposed Prompting introduce explicit task decomposition to enumerate sub-tasks. However, their task decomposers are based on multi-round conversation or question-answering, which navigate the LLM through the deceptive content's flow sequentially, and propagate the hallucination/deception in the contexts \citep{dziri2024faith,yang2023can}, leading to decreased performance.

\section{Preliminary of Divide-and-Conquer Prompting}

In this section, we summarize and formalize \textbf{Divide-and-Conquer prompting strategy}. Divide-and-Conquer prompting strategy consists of three distinct stages: task decomposition stage, sub-task resolution stage, solution merge stage.  In task decomposition stage, the LLM is prompted to explicitly decompose the task as a series of parallel homogeneous sub-tasks with smaller problem sizes (e.g. divide a long paragraph to sentences). Such design avoids the multi-round conversation or question-answering in LtM and Decomposed Prompting, making the model less prone to deception. After that, in sub-task resolution stage, the LLM is prompted to provide the solutions for every sub-task. Finally, in the solution merge stage, the LLM is prompted to assembly the solutions of subtasks and acquire the final answer.
To tackle tasks of different sizes, Divide-and-Conquer prompting strategy can be divided to two variants: Single-Level DaC Solver and Multi-Level DaC Solver.

\begin{algorithm}[h]
  \caption{Single-Level Divide-and-Conquer Solver $T(S,a,t,L,f)$}
  \label{alg:prompt_single}  
  \begin{algorithmic}[1]
    \REQUIRE Input Sequence $S$, Prompt $m$ (for solution merge), Prompt $t$ (for sub-task tackling), Prompt $d$ (for task decomposition), LLM $L$
    \ENSURE Results of the task on input sequence $S$
    \STATE $\{S_1,S_2,...,S_k\} \leftarrow L(d,S)$
    \STATE Result $\leftarrow \emptyset$
    \FOR{$i=1,2,...,k$}
    \STATE Result $\leftarrow$ Result $+ [SEP] + L(t,S_i)$
    \ENDFOR
    \STATE \textbf{Return} $L(m,Result)$
  \end{algorithmic}  
\end{algorithm} 

Single-level Divide-and-Conquer Solver decomposes the task in one call to the LLM, which expands the original task as a tree of one level. The algorithm is presented in the Alg. \ref{alg:prompt_single}. The advantage of this variant is its simplicity and efficiency.
However, when the original input is too long, single-level Divide-and-Conquer Solver may acquire sub-tasks with large problem sizes that will still trigger intermediate errors. In such a case, following \citep{khot2022decomposed}, we can recursively expand the task as a multi-level tree. More specifically, we repeat the aforementioned steps to further divide the sub-tasks hierarchically until they are easy enough to be handled by the LLM. This can be done through a recursion program as presented in Alg. \ref{alg:prompt_multi}. More discussions on the proposed method's appliance scope, including its comparison with other prompting strategies and limitations, can be found in \ref{sec.limit}

\begin{algorithm}[!htb]
  \caption{Multi-Level Divide-and-Conquer Solver Recursion $T(S,m,t,d,f,n,L)$}
  \label{alg:prompt_multi}  
  \begin{algorithmic}[1]
    \REQUIRE Input Sequence $S$, Problem Size Metric Function $f(\cdot)$ (a function that measure the problem size), hyper-parameter $w$, Prompt $m$ (for merge),  Prompt $t$ (for sub-task tackling), Prompt $d$ (for task decomposition), Large Language Model $L$
    \ENSURE Results of the task on input sequence $S$
    \STATE $S_1,S_2,...,S_k \leftarrow L(d,S)$
    \STATE Result $\leftarrow \emptyset$
    \FOR{$i=1,2,...,k$}
    \IF{$f(S_i) > w$}
    \STATE Result $\leftarrow$ Result $+ [SEP] + T(S_i, m,t,d,f,w,L)$
    \ELSE
    \STATE Result $\leftarrow$ Result $+ [SEP] + L(t,S_i)$
    \ENDIF
    \ENDFOR
    \STATE \textbf{Return}  $L(m,Result)$
  \end{algorithmic}  
\end{algorithm}

\vspace{-0.in}
\section{Main Theoretic Results}
\label{sec.proof}
In this section, we provide theoretic analysis to the utility and limitations of the Divide-and-Conquer prompting. In the first subsection, we provide a comparison of IO prompting (common fixed-length instructional prompting) and DaC prompting in expressive power perspective. This part answers the first research question: the expressive power of IO prompting is a subset of DaC prompting. 
In the second subsection, we provide a comparison between Chain-of-Thoughts and DaC prompting in expressive power. Our comparison suggests that, although the expressive power of DaC prompting is a subset of Chain-of-Thoughts, for tasks satisfying specific conditions, DaC prompting can solve the problem with lower average context window length when decoding the tokens. Such property is empirically proved to be helpful for reducing the intermediate error and thus boost the performance.

\subsection{Divide-and-Conquer vs. IO Prompting}
We show that the expressive power of Divide-and-Conquer is stronger than IO Prompting:
\begin{theorem}
    We denote the set of problems that a fixed-precision transformer with fixed-length IO prompting can tackle as $S(IO)$. Similarly, we denote the set of problems that a fixed-precision transformer with DaC prompting can tackle as $S(DaC)$. Then we have the following results:
    \begin{equation}
        S(IO) \subset {\sf TC^0} \subseteq {\sf NC^1} \subseteq S(DaC)
    \end{equation}
\end{theorem}

\textbf{Proof Sketch:} The conclusion that $S(IO) \subset {\sf TC^0}$ is a corollary of the main results in \citep{chiang2023tighter}. In this paper, we mainly focus on proving ${\sf NC^1} \subseteq S(DaC)$. Specifically, we exploit 2-color Binary Subtree Isomorphism (2-BSI) problem for the proof. In \citep{jenner2003completeness}, 2-BSI problem is proved to be an ${\sf NC^1}$-complete problem. 
Its definition is:
\begin{definition}
    \textbf{2-color Binary Subtree Isomorphism problem} is that, given a pattern 2-color
    binary tree $t_p$ and a base 2-color binary tree $t_b$, a solver is required to judge whether the pattern tree is isomorphic to a sub-tree of $t_b$
\end{definition}
In \citep{jenner2003completeness}, the authors pointed out that the encoding of the problem will influence the hardness of the problem. In this paper, we focus on pointer list encoding of 2-BSI. Detailed information about the pointer list encoding of 2-BSI can be found in Appendix. For pointer list encoding of 2-BSI, we have the following theorem:

\begin{theorem}
    There exists a log-precision transformer with fixed depth $L$ and hidden dimension $d$ that can solve the 2-BSI of any size with fixed-length prompt $m$ (for merge), $t$ (for sub-task tackling) and $d$ (for task decomposition).
\end{theorem}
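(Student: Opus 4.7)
The plan is to exhibit a specific divide-and-conquer algorithm for 2-BSI, verify that each of its three stages is computable in $\mathsf{TC}^0$, and then invoke Merrill and Sabharwal's result that $\mathsf{TC}^0$ predicates are realizable by a fixed-depth log-precision transformer. Algorithm 2 then carries out the recursion externally, so the constant depth budget of the transformer is never violated even though the overall recursion tree grows with the input size.

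For the recursion, let $\mathrm{SMatch}(t_p,t_b)$ denote the predicate ``$t_p$ is isomorphic to some subtree of $t_b$''; writing $\ell(\cdot),r(\cdot)$ for the left/right children under pointer-list encoding, we have
$$\mathrm{SMatch}(t_p,t_b)\;=\;\mathrm{RootMatch}(t_p,t_b)\,\vee\,\mathrm{SMatch}(t_p,\ell(t_b))\,\vee\,\mathrm{SMatch}(t_p,r(t_b)),$$
where $\mathrm{RootMatch}$ is an analogous root-anchored recursion on both trees. I would then design the three prompts as follows: prompt $d$ causes the transformer, given an encoded tree, to emit the encodings of its two root-children together with the root color, which is a constant-depth pointer-indexing operation and hence in $\mathsf{TC}^0$; prompt $t$, which Algorithm 2 only invokes on instances of size at most the constant $w$, can be realized as a transformer that effectively hard-codes the finite truth table for 2-BSI on $\le w$-node trees, trivially in $\mathsf{TC}^0$; prompt $m$ merely computes the disjunction of the sub-results (or, on the $\mathrm{RootMatch}$ branch, the conjunction together with the color-equality bit), which is the standard $\mathsf{TC}^0$ aggregation. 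Correctness of the composite then follows by induction on $|t_b|$ (and on $|t_p|$ inside the $\mathrm{RootMatch}$ recursion): the base case $|t_b|\le w$ is decided directly by the sub-task transformer, and the recursive case reduces, via the identity above, to invocations of $T$ on strictly smaller inputs.

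The principal obstacle I expect is the clean separation between the fixed-length prompts and the variable-length encoded tree that accompanies them in each transformer call: one must justify that all size-dependent work (pointer parsing, child extraction, sub-result aggregation) stays in $\mathsf{TC}^0$ for arbitrarily large $t_p,t_b$ so that the Merrill--Sabharwal bound applies uniformly, and that the output of $d$ has length bounded by the input length so that it can be emitted by a log-precision decoder. A related technical point is dispatch: because the same transformer must handle both the outer $\mathrm{SMatch}$ and the inner $\mathrm{RootMatch}$ recursions, I would augment the input encoding with a single mode-bit so that the three prompts $d,t,m$ can remain fixed templates rather than being duplicated per recursion type, which keeps the prompt length constant without affecting the asymptotic argument.
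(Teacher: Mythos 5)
There is a genuine gap at the heart of your plan: you reduce everything to showing the three stage-functions are in $\mathsf{TC}^0$ and then ``invoke Merrill and Sabharwal's result that $\mathsf{TC}^0$ predicates are realizable by a fixed-depth log-precision transformer.'' That result does not exist in this direction. Merrill and Sabharwal prove an \emph{upper} bound -- log-precision fixed-depth transformers are contained in (uniform) $\mathsf{TC}^0$ -- which is what the paper uses for the \emph{negative} result (Theorem 4.1); it says nothing about every $\mathsf{TC}^0$ function being expressible by such a transformer. So the step your proof actually needs -- that a single fixed transformer, under the three fixed prompts $d$, $t$, $m$, computes the decomposition, base-case, and merge functions -- is exactly what remains unproven. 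The paper fills this hole by explicit construction: a two-layer transformer whose attention heads perform pointer dereference and broadcast of the root/prompt tokens (using the retrieval-by-position constructions of Merrill 2023 and Feng et al.), followed by an MLP that simulates a fixed Boolean circuit implementing all three functions, justified by a separate lemma (Lemma A.1) that any fixed-size AND/OR/NOT circuit is exactly simulable by a ReLU MLP. Without an analogous construction (or a citable simulation theorem), your positive claim is unsupported.

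A second, related problem is your decomposition step. If prompt $d$ is supposed to ``emit the encodings of its two root-children,'' i.e.\ materialize the child subtrees from the pointer-list encoding, that is a descendant/reachability computation, which is not known to be in $\mathsf{TC}^0$ (and is certainly not a single constant-depth pointer dereference), so the very stage you call trivial would break the constant-depth budget. The paper sidesteps this by never extracting subtrees: every call receives the full encodings of both trees, and the recursion state is only a constant-size root vector $r$; decomposition just rewrites the first coordinate of $r$ to the left or right child index, and the size metric $f$ is the depth of the pattern subtree indicated by $r$ (testing $f>1$ is one pointer check). If you adopt that fix, your base-case description also has to change, since the input to prompt $t$ is then still the full pair of trees plus a pointer, not a $\le w$-node instance, so ``hard-code the finite truth table'' no longer applies; the paper's base case instead outputs, in parallel, a color-match indicator vector over all base-tree nodes. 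Your choice to recurse over the base tree with a pairwise $\mathrm{RootMatch}$ (versus the paper's recursion over the pattern tree, returning at each call an indicator vector over all base nodes) is not itself fatal -- the number of calls stays polynomial and your mode bit plays the same role as the paper's prompt-indicator dimensions -- but it does not repair either of the two gaps above.
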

\textbf{Proof Sketch:} The detailed proof is provided in the Appendix \ref{proof_main}. Here we give a brief flow of the proof. To prove this theorem, we first show an algorithm that can solve the problem with divide-and-conquer strategy. Then we prove that there exists a log-precision transformer with fixed depth $L$ and hidden dimension $d$ that can express the modules in the algorithms with different but fixed-length prompts. In this way, we can prove the theorem.

With the above theorem, we can prove that ${\sf NC^1} \subseteq S(DaC)$, which finishes the proof. With this theoretic results, we can answer the \textbf{RQ 1}: 

\textit{Compared to IO prompting, DaC have theoretically guaranteed advantages in expressive power.}

\subsection{DaC vs. CoT}

In this section, we compare Divide-and-Conquer with Chain-of-Thoughts in order to understand the utility and limitation of DaC prompting. The limitation of DaC prompting is that its expressive power is a subset of CoT prompting:
\begin{prop}
    We denote the set of problems that a fixed-precision transformer with DaC prompting can tackle as $S(DaC)$. Similarly, we denote the set of problems that a fixed-precision transformer with CoT prompting can tackle as $S(CoT)$ Then we have the following results:
    \begin{equation}
        S(DaC)\subseteq S(CoT)
    \end{equation}
\end{prop}
\begin{proof}
    The proof of this proposition is very straightforward. For any problem that DaC can solve, we can concatenate all outputs of LLM in dividing, tackling and merging as a sequence. Then we can prompt LLM with CoT to output this sequence. Thus, the problem set that DaC can resolve is a subset of CoT.
\end{proof}

The limitation revealed by the above theorem shows that compared to CoT, the appliance scope of Divide-and-Conquer is limited. However, by analyzing the average decoding context window size, we show that on specific tasks, divide and conquer can reduce the problem complexity:
\begin{definition}
    {\bf Decoding Context Window Size:} In auto-regressive decoding, each token is decoded from a window that covers all previous tokens. We denote the length of the window as the Decoding Context Window Size of the token.
\end{definition}
\begin{prop}
\label{prop.window}
    Suppose that a task contains $k$ sub-tasks, each of which does not rely on the answers of other sub-tasks. We define such sub-tasks as {\bf parallel sub-tasks}. If an LLM tackle these sub-tasks sequentially with CoT, then the average decoding context window size of the sub-tasks' resolution will be $C+\frac{\sum_{i=1}^kr_i-1}{2}$, where $r_i$ is the length of the response to the $i$-th sub-task and $C$ is the length of input context. If we tackle them parallely with DaC, then the average decoding context window size of the sub-tasks' resolution will be $C+\sum_{i=1}^k\frac{(r_i-1)^2}{2\sum_{j=1}^kr_j}<C+\frac{\sum_{i=1}^kr_i-1}{2}$.
\end{prop}

The above proposition shows that when task contains a large amount of {\bf parallel sub-tasks}, DaC is more helpful for reducing the average decoding context window size than CoT. Existing works have empirically showed that long decoding context window will propagate intermediate errors and thus increase the probability of generating hallucination \citep{yang2023can}. Thus, we can acquire a conclusion that DaC is competetive on tasks that contain a large amount of \textbf{parallel sub-tasks} and are bothered by intermediate errors and hallucination. With these theoretic results, we can answer the \textbf{RQ 2}: 

\textit{Compared to CoT and its variants, DaC prompting's expressive power is weaker. However, on tasks containing a large amount of {\bf parallel sub-tasks}, DaC is more helpful.}

\subsection{Advantages of DaC}
The above analysis answer the two research questions that we proposed. By summarizing these two answers, we can acquire the two conditions such that when a task simultaneously satisfied both conditions, DaC bring performance boost:
\begin{itemize}
    \item \textbf{Condition 1:} \textit{the task is harder than $S(IO)$, such as ${\sf TC^0}$-complete problems and ${\sf NC^1}$-complete problems.}
    \item \textbf{Condition 2:} \textit{the task contains a large amount of parallel sub-tasks and is bothered by hallucinations or intermediate errors.}
\end{itemize}

In Tab. \ref{tab:tasks}, we present some sample tasks that satisfied the conditions. Also, we list some tasks that typically do not satisfy the conditions. This is helpful for guiding prompt engineering. Details are provided in Appendix \ref{app.list}.
\begin{table}[!htb]
    \centering
    \begin{tabular}{|c|c|}
    \hline
         Applicable Tasks & Non-Applicable Tasks \\
         \hline
         Integer Multiplication& Integer Addition\\
         Fact Verification & Multi-round QA\\
        Consistency Evaluation & Planning\\
         \hline
    \end{tabular}
    \caption{We list some exaple tasks that satisfy the conditions and some tasks that do not satisfy the conditions.}
    \label{tab:tasks}
\end{table}

\vspace{-0.2in}
\section{Experiments}

\subsection{Case 1: Long Integer Arithmetic}
In this case, we consider two tasks in long integer arithmetic: \textbf{Multiplication}, which satisfy the conditions we proposed, and \textbf{Addition}, which does not satisfy the first condition \footnote{Multiplication is a ${\sf TC^0}$-complete problem and can be divided to multiple parallel sub-tasks, while Addition is in $S(IO)$\cite{barcelo2023logical}}. Our experiment results will show that DaC prompting bring performance boost on multiplication and does not bring boost on integer addition.

\begin{figure}[!htb]
    \centering
    \includegraphics[width=0.9\columnwidth]{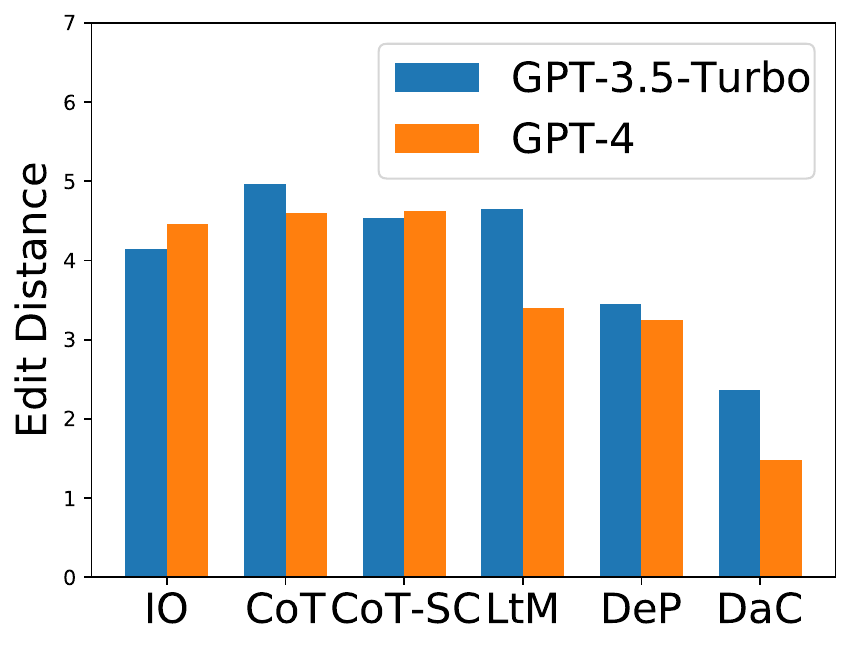}
    \caption{Edit distance of DaC and baseline prompting strategies on GPT-3.5 and GPT-4 for Multiplication.}
    \label{fig:acc}
\end{figure}

\begin{figure}[!htb]
    \centering
    \includegraphics[width=0.9\columnwidth]{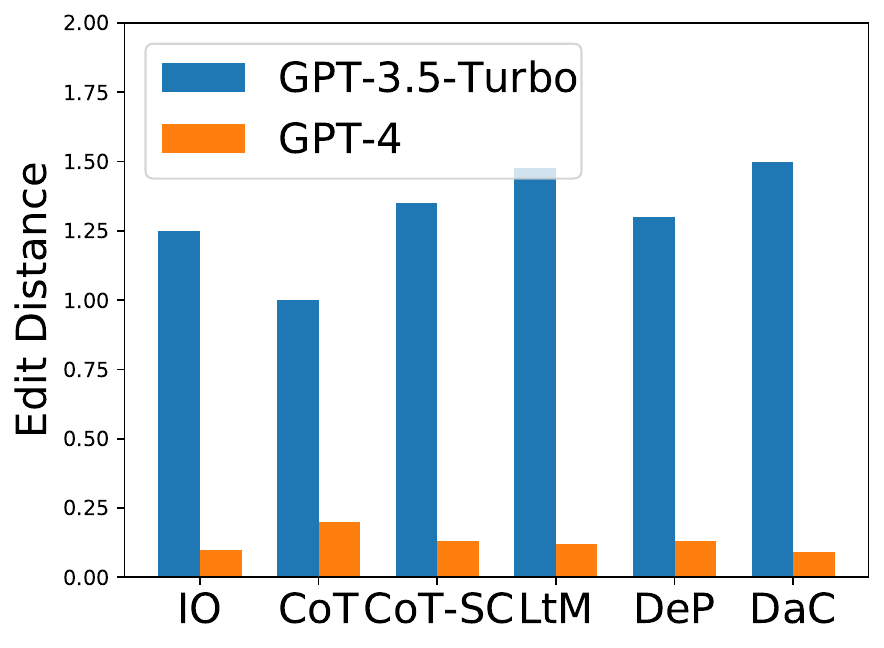}
    \caption{Edit distance of DaC and baseline prompting strategies on GPT-3.5 and GPT-4 for Addition.}
    \label{fig:ed}
\end{figure}

\begin{table*}[!htb]
    \small
    \centering
    \begin{tabular}{ |c|c|c|c|c|c|c|c|c| }
    \hline
    \multirow{2}{4em}{Strategies} &\multicolumn{4}{|c|}{GPT-3.5-Turbo}&\multicolumn{4}{|c|}{GPT-4} \\
    \cline{2-9}
    & F1 & Acc & Prec& Recall & F1 & Acc & Prec& Recall\\
    \hline
    IO-prompting &61.69&61.27&62.11&61.28&64.07&72.66&\textbf{93.41}&48.76\\
    Chain-of-Thoughts &46.85&64.26&\textbf{91.36}&31.50&71.05&76.10&90.08&58.66\\
    CoT-SC&47.70&64.25&88.83&32.60&71.39&76.36&90.41&58.98\\
    Tree-of-Thoughts &70.40&59.91&55.83&\textbf{95.34}&69.41&71.73&75.53&64.28\\
    Least-to-Most &56.43&64.91&74.42&45.44&72.51&77.11&90.74&60.38\\
    
    \hline
    
    Divide-and-Conquer & \textbf{74.84} & \textbf{75.55} & 77.41 & 72.03 & \textbf{76.92} & \textbf{78.99} & 85.36 & \textbf{70.01}\\
    \hline
    \end{tabular}

    \caption{Performance of different prompting methods on HaluEval dataset. }
    \label{tab:HaluEval}
\end{table*}

%\textbf{Task Setup:} For this task, we randomly generated 200 pairs of 5-digit integers. We choose 5 for the digit length because according to previous works, ChatGPT-3.5 gets 0\% accuracy on 4-digit multiplications \citep{cheng-zhang-2023-analyzing}, and ChatGPT-4 gets close to 0\% accuracy on 5-digit multiplications \citep{yang2023gpt}. We evaluate the performance with Edit Distance \citep{marzal1993computation,schaeffer2023emergent}.

\textbf{Setup of baselines and DaC:} In this task, our baselines include IO prompting, Chain of Thought (CoT), CoT-SC, Least-to-Most (LtM), and Decomposed Prompting (DeP). 
Tree-of-Thoughts is not applicable. This is because that multiplication is deterministic calculation without requiring search in a tree. For DaC, we apply Multi-Level Divide-and-Conquer program-guided solver.

\textbf{Results:} Experimental results are shown in Fig. \ref{fig:acc} and \ref{fig:ed}. As we can see, for integer addition which does not satisfy our proposed conditions, the performance of DaC, CoT and its variants does not significantly outperform IO prompting for both ChatGPT-3.5 and 4. However, for integer multiplication which satisfy our proposed conditions, under all settings, our proposed prompting strategy outperform all the baselines. This phenomenon indicate that our proposed conditions are useful for recognizing the tasks where DaC is more powerful.

\subsection{Case 2: Fact Verification of Long Text}
In the previous section, we show that for arithmetic tasks, our proposed conditions are discerning to the tasks where divide-and-conquer has advantages. 
In this section, we further present our conditions can be applied to natural language tasks. Specifically, we present the performance of baselines and Divide-and-Conquer on fact verification of long text. In this task, the LLM is required to whether a long corpus is aligned with base knowledge. This task \textbf{satisfied the proposed two conditions}. For the first condition, we can reduce a 2-BTI problem to fact verification by describing the two trees with natural language. In this way, we can convert the trees to two paragraphs and what we need to do is to ask the LLM to judge whether the two paragraphs are aligned or not. For the second condition, since we are tackling long text, then each sentence can be regarded as parallel sub-tasks. We select two benchmarks of fact verification: \textbf{Fact-Verification for Hallucination Detection} and \textbf{Fact-Verification for Misinformation Detection}

\subsubsection{Hallucination Detection}
Although Large Language Models have achieved impressive performance on various NLP tasks, they are bothered by hallucination problem \citep{manakul2023selfcheckgpt}, especially when the generated content or the input context is too long for the user to have a thoroughly review \citep{zhang2023siren}. In this paper, we focus on evaluating the performance of different strategies in guiding LLM to recognize inconsistency between given context and model response with hallucination.

\begin{table*}[!htb]
    \small
    \centering
    \begin{tabular}{ |c|c|c|c|c|c|c|c|c| }
    \hline
    \multirow{2}{4em}{Strategies} &\multicolumn{4}{|c|}{GPT-3.5-Turbo}&\multicolumn{4}{|c|}{GPT-4} \\
    \cline{2-9}
    & F1 & G-M & Prec& Recall & F1 & G-M & Prec& Recall\\
    \hline
    Io-Prompting &72.12&72.77&83.22&63.64&69.15&71.77&94.44&54.55\\
    Chain-of-Thoughts &56.09&60.64&90.48&40.64&74.03&75.79&94.21&60.96\\
    CoT-SC &56.83&61.44&\textbf{91.67}&41.18&70.09&73.45&\textbf{100.0}&53.95\\
    Tree-of-Thoughts    &69.91&73.30&53.74&\textbf{100.0}&77.34&78.00&88.89&68.45\\
    Least-to-Most   &54.08&54.15&51.46&56.99&73.56&74.25&85.21&64.71\\
    
    \hline
    
    Divide-and-Conquer &\textbf{76.88}&\textbf{77.13}&83.65&71.12&\textbf{81.11}&\textbf{81.24}&76.67&\textbf{86.10}\\
    \hline
    \end{tabular}

    \caption{Performance of different prompting methods on SciFact dataset.}
    \label{tab:SciFact}
    \vspace{-0.05in}
\end{table*}

\textbf{Task Setup:} We use the HaluEval-Summary dataset. It is one of the three datasets in HaluEval benchmark for hallucination detection, which contains the hallucination generated by ChatGPT-3.5. HaluEval-Summary have the longest context and generated contents among all three tasks in this benchmark \citep{li2023halueval}. Thus, detecting hallucination on this dataset requires repeatedly verify each sentence in the response, making standard prompting strategies acquire the worst accuracy across all three tasks. We report the Accuracy, F1 score (the hallucination pairs are positive samples), Precision and Recall.

\textbf{Setup of baselines, ablation variants and DaC:} In this task, our baselines include IO prompting, Chain of Thought, CoT-SC, Tree-of-Thoughts Least-to-Most, and Decomposed Prompting. 
In this task, the sub-tasks are verifying fragments of the summary, which are homogeneous and do not require recurssion. In such a setting, Decomposed Prompting is equivalent to LtM. For this task, we apply single level Divide-and-Conquer solver to decompose the summary to multiple sentences, handle them separately and then merge the conclusions of all sentences. The details are in Appendix.

\textbf{Results:} Experimental results are shown in Tab. \ref{tab:HaluEval}.
For both GPT-3.5 and GPT-4, our proposed prompting strategy outperform the baselines, presenting the advantage of DaC. More specifically, compared to IO-prompting, DaC achieved better performance in general, indicating the advantage brought by stronger expressive power. Meanwhile, compared to CoT and CoT-SC results, DaC clearly achieved much better recall. Tree-of-Thoughts, benefited by its searching ability, acquired significantly better recall score compared to other baselines. However, its significantly lower precision substantially harm its overall performance and leads to accuracy that is even worse than standard IO-prompting. 
In contrary, DaC carefully checked all sentences, locate the one containing factual error and merge the answers.

\subsubsection{Misinformation Detection}

The increasing abuse of misinformation toward manipulating public opinions on social media has been observed in different areas, such as healthcare (e.g. the recent COVID-19 pandemic) \citep{sharma2020coronavirus,sharma2022covid}. This threat is increasingly serious due to LLM's capacity in content generation \citep{li2023you,weidinger2021ethical,zhang2022counterfactual}. This challenge raise the importance of fact-verification, which aims at judging the authenticity of an article based on a collection of evidence from verified source \citep{whitehouse2022evaluation,zhang2023towards}. In this experiment, we present that DaC can outperform other baselines in fact-verification involved with news article.

\textbf{Task Setup:} In this experiment, we mainly adopt SciFact dataset \citep{wadden-etal-2020-fact}. In SciFact dataset, each sample is a pair of news and evidence, where the evidence is the abstract of a peer-reviewed paper and the news is a sentence of claim. To better simulate the real-world scenario where news on social media usually appears as an paragraph of post, following \citeauthor{chen2023can}, we generate a dataset of paragraph-level misinformation based on SciFact dataset. Specifically, for a given claim, we apply ChatGPT-4 to extend the claim as an article based on the evidence. For this task, similar as hallucination detection, we apply single level Divide-and-Conquer solver to decompose the news article to multiple sentences, handle them separately and then merge the conclusions of all sentences. Also, the baselines in this experiments are the same as Hallucination Detection. The evaluation metrics includes F1 score, G-Mean score (geometric mean of precision and recall), Precision and Recall. We do not apply accuracy as the positive and negative classes are not balanced.

\textbf{Results:} Experimental results are shown in Tab. \ref{tab:SciFact}.  Notably, GPT-3.5 incorporated with our proposed prompting strategy even outperform the performance of GPT-4 incorporated with IO-prompting, Least-to-Most, CoT and CoT-SC, which have significantly lower recall scores, indicating their proneness to deception. Only Tree-of-Thoughts, which is benefited by its advantage in exploring various options, acquired the best results among all baselines, but is still defeated by DaC. Moreover, as we can see, for GPT-4 the performance of CoT-SC is even worse than CoT, which is supposed to be a specific case of CoT-SC without exploration. These results suggests that, when facing deceptive contents generated on purpose, existing works' improvement may not be robust.

\section{Conclusions}
In this paper, we analyze the utility and limitations of divide-and-conquer prompting strategy. We first provide theoretic analysis to Divide-and-Conquer prompting and compare it with representative prompting strategies. Based on these theoretic results, we summarize two conditions under which a task is suitable for Divide-and-Conquer prompting. After that we present empirical results that validated our theoretic analysis.

\clearpage

\bibliography{custom}

\begin{thebibliography}{49}
\expandafter\ifx\csname natexlab\endcsname\relax\def\natexlab#1{#1}\fi

\bibitem[{Barcelo et~al.(2023)Barcelo, Kozachinskiy, Lin, and Podolskii}]{barcelo2023logical}
Pablo Barcelo, Alexander Kozachinskiy, Anthony~Widjaja Lin, and Vladimir Podolskii. 2023.
\newblock \href {http://arxiv.org/abs/2310.03817} {Logical languages accepted by transformer encoders with hard attention}.

\bibitem[{Brown et~al.(2020)Brown, Mann, Ryder, Subbiah, Kaplan, Dhariwal, Neelakantan, Shyam, Sastry, Askell et~al.}]{brown2020language}
Tom Brown, Benjamin Mann, Nick Ryder, Melanie Subbiah, Jared~D Kaplan, Prafulla Dhariwal, Arvind Neelakantan, Pranav Shyam, Girish Sastry, Amanda Askell, et~al. 2020.
\newblock Language models are few-shot learners.
\newblock \emph{Advances in neural information processing systems}, 33:1877--1901.

\bibitem[{Bubeck et~al.(2023)Bubeck, Chandrasekaran, Eldan, Gehrke, Horvitz, Kamar, Lee, Lee, Li, Lundberg et~al.}]{bubeck2023sparks}
S{\'e}bastien Bubeck, Varun Chandrasekaran, Ronen Eldan, Johannes Gehrke, Eric Horvitz, Ece Kamar, Peter Lee, Yin~Tat Lee, Yuanzhi Li, Scott Lundberg, et~al. 2023.
\newblock Sparks of artificial general intelligence: Early experiments with gpt-4.
\newblock \emph{arXiv preprint arXiv:2303.12712}.

\bibitem[{Chen and Shu(2023)}]{chen2023can}
Canyu Chen and Kai Shu. 2023.
\newblock Can llm-generated misinformation be detected?
\newblock \emph{arXiv preprint arXiv:2309.13788}.

\bibitem[{Chen et~al.(2023{\natexlab{a}})Chen, Fu, Yuan, Wen, Fan, Liu, Zhang, Li, and Xiao}]{chen2023hallucination}
Yuyan Chen, Qiang Fu, Yichen Yuan, Zhihao Wen, Ge~Fan, Dayiheng Liu, Dongmei Zhang, Zhixu Li, and Yanghua Xiao. 2023{\natexlab{a}}.
\newblock Hallucination detection: Robustly discerning reliable answers in large language models.
\newblock In \emph{Proceedings of the 32nd ACM International Conference on Information and Knowledge Management}, pages 245--255.

\bibitem[{Chen et~al.(2023{\natexlab{b}})Chen, Mao, Li, Jin, Wen, Wei, Wang, Yin, Fan, Liu et~al.}]{chen2023exploring}
Zhikai Chen, Haitao Mao, Hang Li, Wei Jin, Hongzhi Wen, Xiaochi Wei, Shuaiqiang Wang, Dawei Yin, Wenqi Fan, Hui Liu, et~al. 2023{\natexlab{b}}.
\newblock Exploring the potential of large language models (llms) in learning on graphs.
\newblock \emph{arXiv preprint arXiv:2307.03393}.

\bibitem[{Cheng and Zhang(2023)}]{cheng-zhang-2023-analyzing}
Vincent Cheng and Yu~Zhang. 2023.
\newblock \href {https://aclanthology.org/2023.rocling-1.22} {Analyzing {C}hat{GPT}{'}s mathematical deficiencies: Insights and contributions}.
\newblock In \emph{Proceedings of the 35th Conference on Computational Linguistics and Speech Processing (ROCLING 2023)}, pages 188--193, Taipei City, Taiwan. The Association for Computational Linguistics and Chinese Language Processing (ACLCLP).

\bibitem[{Chiang et~al.(2023)Chiang, Cholak, and Pillay}]{chiang2023tighter}
David Chiang, Peter Cholak, and Anand Pillay. 2023.
\newblock \href {http://arxiv.org/abs/2301.10743} {Tighter bounds on the expressivity of transformer encoders}.

\bibitem[{Cormen et~al.(2022)Cormen, Leiserson, Rivest, and Stein}]{cormen2022introduction}
Thomas~H Cormen, Charles~E Leiserson, Ronald~L Rivest, and Clifford Stein. 2022.
\newblock \emph{Introduction to algorithms}.

\bibitem[{Cortes and Lawrence(2021)}]{cortes2021inconsistency}
Corinna Cortes and Neil~D Lawrence. 2021.
\newblock Inconsistency in conference peer review: revisiting the 2014 neurips experiment.
\newblock \emph{arXiv preprint arXiv:2109.09774}.

\bibitem[{Creswell and Shanahan(2022)}]{creswell2022faithful}
Antonia Creswell and Murray Shanahan. 2022.
\newblock Faithful reasoning using large language models.
\newblock \emph{arXiv preprint arXiv:2208.14271}.

\bibitem[{Cui et~al.(2023)Cui, Zhang, Li, Lopez, Das, Malin, and Kumar}]{cui2023divide}
Wendi Cui, Jiaxin Zhang, Zhuohang Li, Damien Lopez, Kamalika Das, Bradley Malin, and Sricharan Kumar. 2023.
\newblock A divide-conquer-reasoning approach to consistency evaluation and improvement in blackbox large language models.
\newblock In \emph{Socially Responsible Language Modelling Research}.

\bibitem[{Dziri et~al.(2024)Dziri, Lu, Sclar, Li, Jiang, Lin, Welleck, West, Bhagavatula, Le~Bras et~al.}]{dziri2024faith}
Nouha Dziri, Ximing Lu, Melanie Sclar, Xiang~Lorraine Li, Liwei Jiang, Bill~Yuchen Lin, Sean Welleck, Peter West, Chandra Bhagavatula, Ronan Le~Bras, et~al. 2024.
\newblock Faith and fate: Limits of transformers on compositionality.
\newblock \emph{Advances in Neural Information Processing Systems}, 36.

\bibitem[{Feng et~al.(2023)Feng, Gu, Zhang, Ye, He, and Wang}]{feng2023towards}
Guhao Feng, Yuntian Gu, Bohang Zhang, Haotian Ye, Di~He, and Liwei Wang. 2023.
\newblock Towards revealing the mystery behind chain of thought: a theoretical perspective.
\newblock \emph{arXiv preprint arXiv:2305.15408}.

\bibitem[{Gao et~al.(2023)Gao, Madaan, Zhou, Alon, Liu, Yang, Callan, and Neubig}]{gao2023pal}
Luyu Gao, Aman Madaan, Shuyan Zhou, Uri Alon, Pengfei Liu, Yiming Yang, Jamie Callan, and Graham Neubig. 2023.
\newblock Pal: Program-aided language models.
\newblock In \emph{International Conference on Machine Learning}, pages 10764--10799. PMLR.

\bibitem[{Garcia et~al.(2021)Garcia, Rodriguez-S{\'a}nchez, and Fdez-Valdivia}]{garcia2021quality}
Jose~A Garcia, Rosa Rodriguez-S{\'a}nchez, and J~Fdez-Valdivia. 2021.
\newblock Quality censoring in peer review.
\newblock \emph{Scientometrics}, 126:825--830.

\bibitem[{Hendy et~al.(2023)Hendy, Abdelrehim, Sharaf, Raunak, Gabr, Matsushita, Kim, Afify, and Awadalla}]{hendy2023good}
Amr Hendy, Mohamed Abdelrehim, Amr Sharaf, Vikas Raunak, Mohamed Gabr, Hitokazu Matsushita, Young~Jin Kim, Mohamed Afify, and Hany~Hassan Awadalla. 2023.
\newblock How good are gpt models at machine translation? a comprehensive evaluation.
\newblock \emph{arXiv preprint arXiv:2302.09210}.

\bibitem[{Hu et~al.(2023)Hu, Sheng, Cao, Shi, Li, Wang, and Qi}]{hu2023bad}
Beizhe Hu, Qiang Sheng, Juan Cao, Yuhui Shi, Yang Li, Danding Wang, and Peng Qi. 2023.
\newblock Bad actor, good advisor: Exploring the role of large language models in fake news detection.
\newblock \emph{arXiv preprint arXiv:2309.12247}.

\bibitem[{Jenner et~al.(2003)Jenner, K{\"o}bler, McKenzie, and Tor{\'a}n}]{jenner2003completeness}
Birgit Jenner, Johannes K{\"o}bler, Pierre McKenzie, and Jacobo Tor{\'a}n. 2003.
\newblock Completeness results for graph isomorphism.
\newblock \emph{Journal of Computer and System Sciences}, 66(3):549--566.

\bibitem[{Jung et~al.(2022)Jung, Qin, Welleck, Brahman, Bhagavatula, Bras, and Choi}]{jung2022maieutic}
Jaehun Jung, Lianhui Qin, Sean Welleck, Faeze Brahman, Chandra Bhagavatula, Ronan~Le Bras, and Yejin Choi. 2022.
\newblock Maieutic prompting: Logically consistent reasoning with recursive explanations.
\newblock \emph{arXiv preprint arXiv:2205.11822}.

\bibitem[{Kahneman(2011)}]{kahneman2011thinking}
Daniel Kahneman. 2011.
\newblock \emph{Thinking, fast and slow}.
\newblock macmillan.

\bibitem[{Khot et~al.(2022)Khot, Trivedi, Finlayson, Fu, Richardson, Clark, and Sabharwal}]{khot2022decomposed}
Tushar Khot, Harsh Trivedi, Matthew Finlayson, Yao Fu, Kyle Richardson, Peter Clark, and Ashish Sabharwal. 2022.
\newblock Decomposed prompting: A modular approach for solving complex tasks.
\newblock \emph{arXiv preprint arXiv:2210.02406}.

\bibitem[{Li et~al.(2023{\natexlab{a}})Li, Cheng, Zhao, Nie, and Wen}]{li2023halueval}
Junyi Li, Xiaoxue Cheng, Wayne~Xin Zhao, Jian-Yun Nie, and Ji-Rong Wen. 2023{\natexlab{a}}.
\newblock Halueval: A large-scale hallucination evaluation benchmark for large language models.
\newblock In \emph{Proceedings of the 2023 Conference on Empirical Methods in Natural Language Processing}, pages 6449--6464.

\bibitem[{Li et~al.(2023{\natexlab{b}})Li, Yang, and Zhao}]{li2023you}
Siyu Li, Jin Yang, and Kui Zhao. 2023{\natexlab{b}}.
\newblock Are you in a masquerade? exploring the behavior and impact of large language model driven social bots in online social networks.
\newblock \emph{arXiv preprint arXiv:2307.10337}.

\bibitem[{Liu et~al.(2023)Liu, Yao, Ton, Zhang, Cheng, Klochkov, Taufiq, and Li}]{liu2023trustworthy}
Yang Liu, Yuanshun Yao, Jean-Francois Ton, Xiaoying Zhang, Ruocheng Guo~Hao Cheng, Yegor Klochkov, Muhammad~Faaiz Taufiq, and Hang Li. 2023.
\newblock Trustworthy llms: a survey and guideline for evaluating large language models' alignment.
\newblock \emph{arXiv preprint arXiv:2308.05374}.

\bibitem[{Manakul et~al.(2023)Manakul, Liusie, and Gales}]{manakul2023selfcheckgpt}
Potsawee Manakul, Adian Liusie, and Mark~JF Gales. 2023.
\newblock Selfcheckgpt: Zero-resource black-box hallucination detection for generative large language models.
\newblock \emph{arXiv preprint arXiv:2303.08896}.

\bibitem[{Mao et~al.(2023)Mao, Chen, Zhang, Guerin, and Cambria}]{mao2023gpteval}
Rui Mao, Guanyi Chen, Xulang Zhang, Frank Guerin, and Erik Cambria. 2023.
\newblock Gpteval: A survey on assessments of chatgpt and gpt-4.
\newblock \emph{arXiv preprint arXiv:2308.12488}.

\bibitem[{Merrill and Sabharwal(2023{\natexlab{a}})}]{merrill2023expresssive}
William Merrill and Ashish Sabharwal. 2023{\natexlab{a}}.
\newblock The expresssive power of transformers with chain of thought.
\newblock \emph{arXiv preprint arXiv:2310.07923}.

\bibitem[{Merrill and Sabharwal(2023{\natexlab{b}})}]{merrill-sabharwal-2023-parallelism}
William Merrill and Ashish Sabharwal. 2023{\natexlab{b}}.
\newblock \href {https://doi.org/10.1162/tacl_a_00562} {The parallelism tradeoff: Limitations of log-precision transformers}.
\newblock \emph{Transactions of the Association for Computational Linguistics}, 11:531--545.

\bibitem[{Radford et~al.()Radford, Wu, Child, Luan, Amodei, Sutskever et~al.}]{radford2019language}
Alec Radford, Jeffrey Wu, Rewon Child, David Luan, Dario Amodei, Ilya Sutskever, et~al.
\newblock Language models are unsupervised multitask learners.

\bibitem[{Sharma et~al.(2020)Sharma, Seo, Meng, Rambhatla, and Liu}]{sharma2020coronavirus}
Karishma Sharma, Sungyong Seo, Chuizheng Meng, Sirisha Rambhatla, and Yan Liu. 2020.
\newblock Coronavirus on social media: Analyzing misinformation in twitter conversations.
\newblock \emph{arXiv preprint arXiv:2003.12309}.

\bibitem[{Sharma et~al.(2022)Sharma, Zhang, and Liu}]{sharma2022covid}
Karishma Sharma, Yizhou Zhang, and Yan Liu. 2022.
\newblock Covid-19 vaccine misinformation campaigns and social media narratives.
\newblock In \emph{Proceedings of the International AAAI Conference on Web and Social Media}, volume~16, pages 920--931.

\bibitem[{Tan et~al.(2023)Tan, Min, Li, Li, Hu, Chen, and Qi}]{tan2023evaluation}
Yiming Tan, Dehai Min, Yu~Li, Wenbo Li, Nan Hu, Yongrui Chen, and Guilin Qi. 2023.
\newblock Evaluation of chatgpt as a question answering system for answering complex questions.
\newblock \emph{arXiv preprint arXiv:2303.07992}.

\bibitem[{Tennant and Ross-Hellauer(2020)}]{tennant2020limitations}
Jonathan~P Tennant and Tony Ross-Hellauer. 2020.
\newblock The limitations to our understanding of peer review.
\newblock \emph{Research integrity and peer review}, 5(1):6.

\bibitem[{Touvron et~al.(2023)Touvron, Lavril, Izacard, Martinet, Lachaux, Lacroix, Rozi{\`e}re, Goyal, Hambro, Azhar et~al.}]{touvron2023llama}
Hugo Touvron, Thibaut Lavril, Gautier Izacard, Xavier Martinet, Marie-Anne Lachaux, Timoth{\'e}e Lacroix, Baptiste Rozi{\`e}re, Naman Goyal, Eric Hambro, Faisal Azhar, et~al. 2023.
\newblock Llama: Open and efficient foundation language models.
\newblock \emph{arXiv preprint arXiv:2302.13971}.

\bibitem[{Wadden et~al.(2020)Wadden, Lin, Lo, Wang, van Zuylen, Cohan, and Hajishirzi}]{wadden-etal-2020-fact}
David Wadden, Shanchuan Lin, Kyle Lo, Lucy~Lu Wang, Madeleine van Zuylen, Arman Cohan, and Hannaneh Hajishirzi. 2020.
\newblock \href {https://doi.org/10.18653/v1/2020.emnlp-main.609} {Fact or fiction: Verifying scientific claims}.
\newblock In \emph{Proceedings of the 2020 Conference on Empirical Methods in Natural Language Processing (EMNLP)}, pages 7534--7550, Online. Association for Computational Linguistics.

\bibitem[{Wang et~al.(2022)Wang, Wei, Schuurmans, Le, Chi, Narang, Chowdhery, and Zhou}]{wang2022self}
Xuezhi Wang, Jason Wei, Dale Schuurmans, Quoc Le, Ed~Chi, Sharan Narang, Aakanksha Chowdhery, and Denny Zhou. 2022.
\newblock Self-consistency improves chain of thought reasoning in language models.
\newblock \emph{arXiv preprint arXiv:2203.11171}.

\bibitem[{Wei et~al.(2022)Wei, Wang, Schuurmans, Bosma, Xia, Chi, Le, Zhou et~al.}]{wei2022chain}
Jason Wei, Xuezhi Wang, Dale Schuurmans, Maarten Bosma, Fei Xia, Ed~Chi, Quoc~V Le, Denny Zhou, et~al. 2022.
\newblock Chain-of-thought prompting elicits reasoning in large language models.
\newblock \emph{Advances in Neural Information Processing Systems}, 35:24824--24837.

\bibitem[{Weidinger et~al.(2021)Weidinger, Mellor, Rauh, Griffin, Uesato, Huang, Cheng, Glaese, Balle, Kasirzadeh et~al.}]{weidinger2021ethical}
Laura Weidinger, John Mellor, Maribeth Rauh, Conor Griffin, Jonathan Uesato, Po-Sen Huang, Myra Cheng, Mia Glaese, Borja Balle, Atoosa Kasirzadeh, et~al. 2021.
\newblock Ethical and social risks of harm from language models.
\newblock \emph{arXiv preprint arXiv:2112.04359}.

\bibitem[{Whitehouse et~al.(2022)Whitehouse, Weyde, Madhyastha, and Komninos}]{whitehouse2022evaluation}
Chenxi Whitehouse, Tillman Weyde, Pranava Madhyastha, and Nikos Komninos. 2022.
\newblock Evaluation of fake news detection with knowledge-enhanced language models.
\newblock In \emph{Proceedings of the International AAAI Conference on Web and Social Media}, volume~16, pages 1425--1429.

\bibitem[{Wu et~al.(2023)Wu, Zhu, Xu, Shum, Niu, Zhong, Song, and Zhang}]{wu2023ragtruth}
Yuanhao Wu, Juno Zhu, Siliang Xu, Kashun Shum, Cheng Niu, Randy Zhong, Juntong Song, and Tong Zhang. 2023.
\newblock Ragtruth: A hallucination corpus for developing trustworthy retrieval-augmented language models.
\newblock \emph{arXiv preprint arXiv:2401.00396}.

\bibitem[{Yang and Ettinger(2023)}]{yang2023can}
Chenghao Yang and Allyson Ettinger. 2023.
\newblock Can you follow me? testing situational understanding for chatgpt.
\newblock In \emph{Proceedings of the 2023 Conference on Empirical Methods in Natural Language Processing}, pages 6385--6398.

\bibitem[{Yao et~al.(2023)Yao, Yu, Zhao, Shafran, Griffiths, Cao, and Narasimhan}]{yao2023tree}
Shunyu Yao, Dian Yu, Jeffrey Zhao, Izhak Shafran, Thomas~L Griffiths, Yuan Cao, and Karthik Narasimhan. 2023.
\newblock Tree of thoughts: Deliberate problem solving with large language models.
\newblock \emph{arXiv preprint arXiv:2305.10601}.

\bibitem[{Zhang and Gao(2023)}]{zhang2023towards}
Xuan Zhang and Wei Gao. 2023.
\newblock Towards llm-based fact verification on news claims with a hierarchical step-by-step prompting method.
\newblock \emph{arXiv preprint arXiv:2310.00305}.

\bibitem[{Zhang et~al.(2022)Zhang, Cao, and Liu}]{zhang2022counterfactual}
Yizhou Zhang, Defu Cao, and Yan Liu. 2022.
\newblock Counterfactual neural temporal point process for estimating causal influence of misinformation on social media.
\newblock \emph{Advances in Neural Information Processing Systems}, 35:10643--10655.

\bibitem[{Zhang et~al.(2023)Zhang, Li, Cui, Cai, Liu, Fu, Huang, Zhao, Zhang, Chen et~al.}]{zhang2023siren}
Yue Zhang, Yafu Li, Leyang Cui, Deng Cai, Lemao Liu, Tingchen Fu, Xinting Huang, Enbo Zhao, Yu~Zhang, Yulong Chen, et~al. 2023.
\newblock Siren's song in the ai ocean: A survey on hallucination in large language models.
\newblock \emph{arXiv preprint arXiv:2309.01219}.

\bibitem[{Zhou et~al.(2022)Zhou, Sch{\"a}rli, Hou, Wei, Scales, Wang, Schuurmans, Cui, Bousquet, Le et~al.}]{zhou2022least}
Denny Zhou, Nathanael Sch{\"a}rli, Le~Hou, Jason Wei, Nathan Scales, Xuezhi Wang, Dale Schuurmans, Claire Cui, Olivier Bousquet, Quoc Le, et~al. 2022.
\newblock Least-to-most prompting enables complex reasoning in large language models.
\newblock \emph{arXiv preprint arXiv:2205.10625}.

\bibitem[{Zhu et~al.(2022)Zhu, Wang, Zhang, Zhang, Gan, Zhang, and Yang}]{zhu2022solving}
Xinyu Zhu, Junjie Wang, Lin Zhang, Yuxiang Zhang, Ruyi Gan, Jiaxing Zhang, and Yujiu Yang. 2022.
\newblock Solving math word problem via cooperative reasoning induced language models.
\newblock \emph{arXiv preprint arXiv:2210.16257}.

\bibitem[{Zong and Krishnamachari(2023)}]{zong2023solving}
Mingyu Zong and Bhaskar Krishnamachari. 2023.
\newblock Solving math word problems concerning systems of equations with gpt-3.
\newblock In \emph{Proceedings of the AAAI Conference on Artificial Intelligence}, volume~37, pages 15972--15979.

\end{thebibliography}

\clearpage
\onecolumn
\appendix
\section{Appendix}
\label{sec:appendix}
\subsection{Discussions and Limitations}
\label{sec.limit}

In summary, the proposed method has following advantages:

\textbf{Comparison with IO-Prompting: Superiority in Expressive Power} As we proved in Sec. \ref{sec.proof}, Compared to IO-prompting, DaC has stronger expressive power and thus can solve harder problems.

\textbf{Comparison with CoT and EoT: Disentangling the task decomposition and task resolution} Compared to the prompting family of CoT and EoT, DaC explicitly separate the task decomposition stage and task resolution stage. Therefore, we can acquire explicit decomposed sub-task rather than intermediate thoughts proposed during decoding. Consequently, we can explicitly enumerate all sub-tasks output by the decomposition module and avoid the model from missing important sub-tasks. 

\textbf{Comparison with LtM and Decomposed Prompting: Parallel Sub-task Handler and Sequential Sub-task Handler} Similar as DaC, some program-guided prompting like LtM and Decomposed Prompting also explicitly separate the task decomposition stage and task resolution stage. However, they are mainly designed for multi-step reasoning for complex tasks. Thus, they sequentially tackle the sub-tasks and assembly the resolutions. As a result, they tend to follow the flow of the deceptive contents, leading to proneness to deceptive content. 

Although DaC DaC surpasses the baselines on the proposed tasks, it still has some \textbf{limitations}.
The first issue is that the appliance scope of DaC is still limited. 
More specifically, CoT, EoT, LtM and DaC are based on different algorithmic paradigms, learning to different Appliance Scopes. As pointed out by \citeauthor{feng2023towards}, CoT and LtM can be considered as a neural \textbf{dynamic programming} algorithm. Thus, CoT is more suitable for tasks that can be bridged to dynamic programming, such as multi-step question answering. Differently, EoT is based on \textbf{exploration and search}, which is more suitable for planning and search, such as Game of 24 \citep{yao2023tree}. DaC is based on \textbf{Divide-and-Conquer algorithm}. Thus, it is more suitable for tasks that can be decomposed to a series sub-tasks that are disjoint or only slightly overlapped. Our future work will focus on further expand the appliance scope of DaC to more areas like question answering.

\subsection{Proof to Theorem 4.2}
\label{proof_main}
Before providing the proof, we first formally define how to organize the inputs (i.e., two 2-color trees) as a sequence. We assume that we acquire two trees $t_p$ of size $n$ and $t_b$ of size $m$. They are organized as two sequences of nodes with a random order. Each node has three variables: color, left child index, and right child index. If any child is null, then the index is filled with 0. Then we can organize them as as two sequences $\mat{X}_p\in \mathbb{R}^{n\times3}$ and $\mat{X}_b\in \mathbb{R}^{n'\times3}$, where each item in the sequence is a vector of 3 dimensions. The first dimension is the index of the left child, the second dimension is the index of the right child, the third dimension is the color indicator (0 or 1). In addition, we have a root vector $\mat{r}$ with three dimensions. The first dimension is the index of the root node of $t_p$ (i.e., pointing to the root node of $t_p$) and the second is the index of the root node of $t_b$ (i.e., pointing to the root node of $t_b$). The third dimension of $\mat{r}$ is filled with $0$ to make it have same dimension as the items in $\mat{X}_p$ and $\mat{X}_b$. This expression of trees is also called as pointer list encoding according to \citep{jenner2003completeness}. Note that in the following proof, we assume that all indices start from 1. Thus 0 is regarded as a NULL pointer.

Following the proof flow we provided in Sec. 4.2, we first provide the following divide-and-conquer algorithm that can solve the above problem:

\begin{algorithm}[!htb]
  \caption{Recursion Divide-and-Conquer Algorithm for 2-BSI $BSI(\mat{r},\mat{X}_p,\mat{X}_b,m,t,d,f,w)$}
  \label{alg:bsi}  
  \begin{algorithmic}[1]
    \REQUIRE Inputs $\mat{r},\mat{X}_p,\mat{X}_b$, problem size metric function $f(\cdot)$, hyper-parameter $w$, merge function $m$, sub-task tackling function $t$, task decomposition function $d$
    \ENSURE A 0-1 indicator vector $\mat{v}$: if there exists a subtree with node $i$ as root that is isomorphic with pattern tree $t_p$ defined with inputs $\mat{r},\mat{X}_p,\mat{X}_b$, then the $\mat{v}[i]$ is 1. Otherwise, $\mat{v}[i]$ is 0.
    \STATE $\mat{r}_l, \mat{r}_r \leftarrow d(\mat{r},\mat{X}_p,\mat{X}_b)$
    \FOR{$i\in\{l,r\}$}
    \IF{$f(\mat{r}_i,\mat{X}_p,\mat{X}_b) > w$}
    \STATE $\mat{v}_i \leftarrow BSI(\mat{r}_i,\mat{X}_p,\mat{X}_b,m,t,d,f,w)$
    \ELSE
    \STATE $\mat{v}_i \leftarrow  t(\mat{r}_i,\mat{X}_p,\mat{X}_b)$
    \ENDIF
    \ENDFOR
    \STATE \textbf{Return}  $m(\mat{r},\mat{X}_p,\mat{X}_b,\mat{v}_l,\mat{v}_r)$
  \end{algorithmic}  
\end{algorithm} 

\begin{algorithm}[!htb]
  \caption{Implementation of  $d(\mat{r},\mat{X}_p,\mat{X}_b)$ when the depth of the tree indicated by $\mat{r}$ is not longer than 2}
  \label{alg:d}  
  \begin{algorithmic}[1]
    \REQUIRE Inputs $\mat{r}\in\mathbb{R}^{3},\mat{X}_p\in\mathbb{R}^{n\times3},\mat{X}_b\in\mathbb{R}^{n'\times3}$
    \ENSURE A 0-1 indicator vector $\mat{v}$: if there exists a subtree with node $i$ as root that is isomorphic with pattern tree $t_p$ defined with inputs $\mat{r},\mat{X}_p,\mat{X}_b$, then the $\mat{v}[i]$ is 1. Otherwise, $\mat{v}[i]$ is 0.
    \STATE $\mat{r}_l \leftarrow <\mat{X}_p[\mat{r}[1],2],\mat{r}[2],\mat{r}[3]>$
    \STATE $\mat{r}_r \leftarrow <\mat{X}_p[\mat{r}[1],3],\mat{r}[2],\mat{r}[3]>$
    
    \STATE \textbf{Return}  $\mat{r}_l, \mat{r}_r$
  \end{algorithmic}  
\end{algorithm} 

\begin{algorithm}[!htb]
  \caption{Implementation of  $t(\mat{r},\mat{X}_p,\mat{X}_b)$ when the depth of the tree indicated by $\mat{r}$ is not longer than 2}
  \label{alg:t}  
  \begin{algorithmic}[1]
    \REQUIRE Inputs $\mat{r}\in\mathbb{R}^{3},\mat{X}_p\in\mathbb{R}^{n\times3},\mat{X}_b\in\mathbb{R}^{n'\times3}$
    \ENSURE A 0-1 indicator vector $\mat{v}$: if there exists a subtree with node $i$ as root that is isomorphic with pattern tree $t_p$ defined with inputs $\mat{r},\mat{X}_p,\mat{X}_b$, then the $\mat{v}[i]$ is 1. Otherwise, $\mat{v}[i]$ is 0.
    \STATE Initialize $\mat{v}$ as all q vector with a length of $n'$
    
    \IF {$\mat{r}[1]==0$}
    \STATE \textbf{Return }$\mat{v}$
    \ENDIF
    \FOR{$i\in\{1,2,...,m\}$} 
    \IF{$\mat{X}_b[i,3]!=\mat{X}_p[\mat{r}[1],3]$}
    \STATE $\mat{v}[i]\leftarrow 0$
    \ENDIF
    \ENDFOR
    \STATE \textbf{Return}  $\mat{v}$
  \end{algorithmic}  
\end{algorithm} 

\begin{algorithm}[!htb]
  \caption{Implementation of  $m(\mat{r},\mat{X}_p,\mat{X}_b,\mat{v}_l,\mat{v}_r)$}
  \label{alg:m}  
  \begin{algorithmic}[1]
    \REQUIRE Inputs $\mat{r}\in\mathbb{R}^{3},\mat{X}_p\in\mathbb{R}^{n\times3},\mat{X}_b\in\mathbb{R}^{n'\times3},\mat{v}_l\in\mathbb{R}^{n},\mat{v}_r\in\mathbb{R}^{n}$
    \ENSURE A 0-1 indicator vector $\mat{v}$: if there exists a subtree with node $i$ as root that is isomorphic with pattern tree $t_p$ defined with inputs $\mat{r},\mat{X}_p,\mat{X}_b$, then the $\mat{v}[i]$ is 1. Otherwise, $\mat{v}[i]$ is 0.
    \STATE Initialize $\mat{v}$ as all 0 vector with a length of $n'$
    
    \IF {$\mat{r}[1]==0$}
    \STATE \textbf{Return} $\mat{v}$
    \ENDIF
    \FOR{$i\in\{1,2,...,m\}$} 
    \IF{$\mat{X}_b[i,3]==\mat{X}_p[\mat{r}[1],3]$}
    
    \IF{$\mat{v}_l[\mat{X}_b[i,1]]]==1$ and $\mat{v}_r[\mat{X}_b[i,2]]]==1$}
    \STATE $\mat{v}[i]\leftarrow1$
    \ELSIF{$\mat{v}_l[\mat{X}_b[i,2]]]==1$ and $\mat{v}_r[\mat{X}_b[i,1]]]==1$}
    \STATE $\mat{v}[i]\leftarrow1$
    \ENDIF

    \ENDIF
    \ENDFOR
    \STATE \textbf{Return}  $\mat{v}$
  \end{algorithmic}  
\end{algorithm} 

The algorithm described above is a typical divide-and-conquer algorithm for solving rooted tree isomorphism. Its justification can be found in many textbooks introducing algorithms, such as \textit{Introduction to Algorithms} \citep{cormen2022introduction}. Here we provide the detailed definition and implementation of problem size metric $f(\cdot)$, hyper-parameter $w$, merge function $m()$, sub-task tackling function $t(\cdot)$, task decomposition function $d(\cdot)$:
\begin{itemize}
    \item $w=1$, and $f(\mat{r},\mat{X}_p,\mat{X}_b)$ is defined as the depth of the pattern tree $t_p$ indicated with root vector $\mat{r}$. Although precisely calculating $f(\mat{r},\mat{X}_p,\mat{X}_b)$ is of $O(n)$, judging whether $f(\mat{r},\mat{X}_p,\mat{X}_b)>1$ only require us to check whether the root node has child. If not, then return False.
    \item $d(\mat{r},\mat{X}_p,\mat{X}_b) = \mat{r}_l, \mat{r}_r$ returns two new root vectors $\mat{r}_l, \mat{r}_r$. Both $r_l, r_r$ have the same second and third dimension as $\mat{r}$. The $\mat{r}_l$'s first dimension is updated to be the index of the left child of the root node that $\mat{r}$ points to. The $\mat{r}_r$'s first dimension is updated to be the index of the right child of the root node that $\mat{r}$ points to. The updating function can be written as:
    \item $t(\mat{r},\mat{X}_p,\mat{X}_b) = \mat{v}$ returns a 0-1 indicator vector $\mat{v}\in \mathbb{R}^m$ with the same length of the base tree size. If there exists a subtree with node $i$ as root that is isomorphic with pattern tree $t_p$ defined with inputs $\mat{r},\mat{X}_p,\mat{X}_b$, then the $\mat{v}[i]$ is 1. Otherwise, $\mat{v}[i]$ is 0. When the pattern tree's depth is not higher than 1 (i.e., 1-node tree), $t(\mat{r},\mat{X}_p,\mat{X}_b)$ is equivalent to output a 0-1 vector indicating the nodes in the base tree that have the same color of the root node of pattern tree. The implementation is provided in Alg. \ref{alg:t}.
    \item $m(\mat{r},\mat{X}_p,\mat{X}_b,\mat{v}_l,\mat{v}_l) = \mat{v}$ merge the results $\mat{v}_l,\mat{v}_l$ to acquire a 0-1 indicator vector $\mat{v}\in \mathbb{R}^m$ with the same length of the base tree size. If there exists a subtree with node $i$ as root that is isomorphic with pattern tree $t_p$ defined with inputs $\mat{r},\mat{X}_p,\mat{X}_b$, then the $\mat{v}[i]$ is 1. Otherwise, $\mat{v}[i]$ is 0. This function can be implemented by checking whether the pattern root's children have a perfect match with each node's children. Since each node has at most two children, checking the perfect match can be done in constant time. The implementation is provided in Alg. \ref{alg:m}.
\end{itemize}
 
After providing the detailed implementation of the functions $d(\cdot),t(\cdot),m(\cdot)$, we are going to prove that there exists one unified transformer that can handle all these tasks with different prompts $d,t,m$. First, we will provide the following Lemma:

\begin{lemma}
\label{lemma:1}
    Any fixed-size logic circuit that only contains multi-fan-in AND gates, multi-fan-in OR gates, NOT gates and has no recurrent structure can be precisely simulated by a multi-layer perceptron (MLP) with ReLU activation function and a width of $O(|Input|+|Circuit|)$ and a depth of $O(|Circuit|)$, where $|Input|$ denotes the size of input and $|Circuit|$ denotes the number of gates in the circuit.
\end{lemma}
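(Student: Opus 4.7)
The plan is to construct the MLP by processing the gates in a topological order of the circuit, dedicating a constant number of ReLU layers to each gate while passing all earlier-computed quantities through unchanged. Concretely, I would first fix a topological ordering $g_1, g_2, \ldots, g_{|Circuit|}$ of the gates and maintain, after the block of layers corresponding to $g_k$, a hidden vector whose coordinates store the $|Input|$ original input bits together with the boolean outputs of $g_1, \ldots, g_k$. Because every such coordinate lies in $\{0,1\}$, the identity map is realized on it by a single ReLU, so the ``pass-through'' copy needs only an identity block in the linear part of each layer. This bookkeeping immediately yields the claimed width bound $O(|Input|+|Circuit|)$ and, provided each gate can be simulated in $O(1)$ depth, the depth bound $O(|Circuit|)$.

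The second step is to exhibit the $O(1)$-depth ReLU simulation of each gate type. For a NOT gate on boolean input $x$, the output $1-x$ is affine and can be absorbed into the linear part of the next layer. For a multi-fan-in AND on boolean inputs $x_1,\ldots,x_r$, the formula $\mathrm{ReLU}\bigl(\sum_{i=1}^r x_i - (r-1)\bigr)$ equals $1$ exactly when all $x_i = 1$ and $0$ otherwise, so a single ReLU layer suffices. For a multi-fan-in OR, I would write the output as $1 - \mathrm{ReLU}\bigl(1 - \sum_{i=1}^r x_i\bigr)$, using one ReLU sandwiched between two affine maps, which folds cleanly into two consecutive MLP layers (the outer $1 - (\cdot)$ being absorbed into the linear part of the following layer). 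In every case the output is exactly in $\{0,1\}$, preserving the invariant needed for the pass-through.

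Assembling the pieces, the block of layers for $g_k$ performs three bookkeeping operations in parallel: (i) it copies the existing hidden vector forward via an identity weight block; (ii) it reads off the fan-in of $g_k$ from the appropriate coordinates through a sparse row of weights; and (iii) it writes the output of $g_k$ into a freshly allocated coordinate using the affine-plus-ReLU template above. After the final gate's block, an additional affine read-off projects onto the designated output wires. The total depth is at most a constant times $|Circuit|$, and the width throughout is $O(|Input|+|Circuit|)$, matching the statement.

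The main obstacle I anticipate is not the per-gate construction, which is standard, but justifying \emph{exactness} of the simulation uniformly across all layers. The argument rests on the invariant that every coordinate of every hidden state is in $\{0,1\}$, which in turn depends on the outputs of each of the three gate templates being exactly boolean rather than merely bounded. The only delicate case is the OR gate, where one must check that the intermediate $\mathrm{ReLU}(1 - \sum_i x_i)$ is itself in $\{0,1\}$ so that the subsequent affine $1-(\cdot)$ step yields a boolean value before encountering the next ReLU. Once this invariant is established by induction along the topological order, the identity pass-through and the depth/width counts follow immediately, completing the proof.
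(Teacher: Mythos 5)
Your proposal is correct and follows essentially the same route as the paper's proof: serialize the gates in topological order, simulate each AND/OR/NOT gate exactly with a constant-depth affine-plus-ReLU template (e.g.\ $\mathrm{ReLU}\bigl(\sum_i x_i-(r-1)\bigr)$ for AND), and pass all boolean values forward via identity/ReLU blocks, giving width $O(|Input|+|Circuit|)$ and depth $O(|Circuit|)$. Your explicit induction on the $\{0,1\}$-valuedness invariant is a slightly cleaner justification of exactness than the paper provides (whose OR formula even contains a small typo), but the underlying construction is the same.
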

\begin{proof}
    Assume that we are given a series of input pins with logic variable of 0 or 1, organized as a 0-1 vector $\mat{x}\in\mathbb{R}^h$. We first prove that all gates can be simulated by a two-layer perceptron. Then we can serialize all gates in the circuits and stack their corresponding 2-layer simulators accordingly to acquire a MLP simulator. An AND gate that take $\mat{x}$ as input can be simulated as:
    \begin{equation}
        \text{AND}(\mat{x}) = \sigma(w_A\mat{x}-h+1)
    \end{equation}
    where $\sigma$ is the ReLU activation function, and $w_A$ is a weight vector with all dimensions equal to 1. If some dimensions of $\mat{x}$ are not the input of the gate, we can set the corresponding dimensions in the weight vector as 0 and adjust the $h$ as the input pin number. Similarly, an OR gate that take $\mat{x}$ as input can be simulated as:
    \begin{equation}
        \text{OR}(\mat{x}) = 1-\sigma(w_O\mat{x}+h+1)
    \end{equation}
    where $\sigma$ is the ReLU activation function, and $w_O$ is a weight vector with all dimensions equal to -1. A NOT gate is different, since it only takes one input pin. In such a case, we denote the index of the input pin as $i$, then we can simulate a NOT gate as:
    \begin{equation}
        \text{NOT}(\mat{x}) = \sigma(w_N\mat{x}+1)
    \end{equation}
    where $w_N$ is a weight is a weight vector whose $i$-th dimension equals to -1 and all other dimensions equal to 0.
    Also, since the $x$ is a 0-1 vector, the activation function is equivalent to a identical function to $x$:
    \begin{equation}
        \mat{x} = \sigma(\mat{x})
    \end{equation}
    
    To construct a MLP that can simulate a fixed-size logic circuit without recurrent structure, we apply the circuit serialization in \citep{merrill2023expresssive} which order the gates based on topological order. In this way, we can represent the circuit as a sequence GATE[1], GATE[2], GATE[3],...,GATE[L], where each GATE[i]'s input only contains the output of the previous gates and the original input $\mat{x}$. Therefore, we can construct a $2L$-layer MLP base on the above serialization. Specifically, the $2i$-th and $2i+1$-th layers of the MLP will simulate the $GATE[i]$ as well as copy all previous inputs with activation function and concatenate them together. This can be done by concatenate an identical matrix on the GATE's weight vector ($w_A$, $w_O$ or $w_N$). In this way, we can construct a MLP that precisely simulate the circuit. Since every time we concatenate the output of a gate with the input of it, the input dimension number of the final layer can be bounded by $O(|\mat{x}|+L)$. In the worst case, for a circuit of size $L$, we needs $2L$ layers to precisely simulate it. However, in many cases, a lot of gates in the circuits can be run parallelly. In such cases, the MLP could be much more shallow.

\end{proof}

Now, we can start to prove our main theorem:

\begin{theorem}
    There exists a log-precision transformer with fixed depth and hidden dimension that can solve the 2-BSI of any size with fixed-length prompt $m$ (for merge), $t$ (for sub-task tackling) and $d$ (for task decomposition).
\end{theorem}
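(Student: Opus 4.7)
The plan is to construct a single fixed-depth log-precision transformer that, depending on which of three fixed-length prompts ($d$, $t$, or $m$) is prepended to its input, computes the corresponding function defined in Algorithms \ref{alg:d}, \ref{alg:t}, and \ref{alg:m}. Once such a transformer exists, plugging it into the recursion of Algorithm \ref{alg:bsi} solves 2-BSI for trees of arbitrary size, which is exactly the content of the theorem.

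The first step is to argue that each of $d(\cdot)$, $t(\cdot)$, $m(\cdot)$ individually admits a fixed-depth log-precision transformer implementation. Function $d$ is essentially a constant number of index lookups into $\mat{X}_p$ at position $\mat{r}[1]$; this is directly expressible with a single attention head whose query is derived from $\mat{r}[1]$ and whose keys are the positional encodings of $\mat{X}_p$. Functions $t$ and $m$ both produce a length-$n'$ output where each coordinate $\mat{v}[i]$ depends on (a) a constant-size per-position computation on $\mat{X}_b[i]$ and $\mat{X}_p[\mat{r}[1]]$, and (b) in the case of $m$, a constant number of pointer-chased lookups into $\mat{v}_l, \mat{v}_r$ at positions $\mat{X}_b[i,1]$ and $\mat{X}_b[i,2]$. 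The pointer dereferences are realized by attention (one head per lookup), and the resulting constant-size Boolean combination at each position is realized by the MLP construction of Lemma \ref{lemma:1} in constant depth and constant width, independent of $n$ and $n'$.

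The second step, which I expect to be the main obstacle, is to fuse these three separate constructions into one transformer whose behavior is switched by the prompt. The idea is to reserve a few token positions at the start of the input to encode a one-hot selector of the active function; the embedding layer then broadcasts this selector into every position's feature vector so that it is available to every attention head and every MLP in every layer. The merged transformer is built layer-by-layer by allocating in parallel the attention heads required by each of the three constructions and gating each head's contribution, as well as each MLP's output, by the selector. The gates themselves are constant-depth ReLU blocks provided by Lemma \ref{lemma:1}. Because the three original constructions each have constant depth and width, the fused transformer still has constant depth $L$ and constant hidden dimension independent of problem size, and all intermediate quantities are indices bounded by $n+n'$, which fit in $O(\log(n+n'))$ bits, so the log-precision constraint is preserved.

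Finally, invoking this single transformer inside Algorithm \ref{alg:bsi} with the three fixed-length prompts yields, by the standard correctness of divide-and-conquer for rooted tree isomorphism (e.g.\ \cite{cormen2022introduction} together with the definitions of $d$, $t$, $m$ above), the correct answer for 2-BSI on trees of any size, completing the proof. The delicate part is not the individual simulations, which follow cleanly from Lemma \ref{lemma:1} plus a handful of attention heads, but the bookkeeping that shows the selector-gated merge keeps depth and width genuinely constant while preventing cross-talk between the three sub-computations.
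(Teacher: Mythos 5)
Your proposal is correct and follows essentially the same route as the paper's own proof: a single fixed-depth transformer whose prompt token acts as a broadcast selector, with attention heads performing the pointer dereferences (child lookups, root/prompt retrieval) and the per-token Boolean logic for $d$, $t$, $m$ realized via the Lemma~\ref{lemma:1} MLP simulation, all wrapped in the divide-and-conquer recursion of Algorithm~\ref{alg:bsi}. The paper simply builds the fused transformer directly (two layers, with one MLP simulating a prompt-branching circuit, Alg.~\ref{alg:lc}) rather than merging three separate constructions, but that is a presentational difference, not a different argument.
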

\begin{proof}
    We prove this theorem by constructing a Transformer that can tackle this problem. First we define how to organize the input given $\mat{r},\mat{X_p},\mat{X_b}$ and the prompt. Specifically, we construct a feature sequence $\mat{X}\in\mathbb{R}^{(3+n+n')\times7}$. Each item in this sequence is a feature of 7 dimensions, indicating a token. The first two dimensions indicate whether the token is a prompt ('00'), a root vector ('01'), a pattern tree node ('10'), or a base tree node ('11'). The third to fifth dimensions carries the information about the token. For a prompt token, '100' indicates merge prompt $m$, '010' indicates sub-task tackling prompt $t$, and '001' indicates task decomposition prompt $d$. For other cases, these three dimensions are with the same formula as the three dimensions in $\mat{r},\mat{X}_p,\mat{X}_b$. The rest two dimensions are allocated specifically for the merge function $m(\cdot)$ to store $\mat{v}_l$ and $\mat{v}_r$. More specifically, for the feature of token indicating the $i$-th base tree node, its sixth dimension is $\mat{v}_l[i]$ and its seventh dimension is $\mat{v}_r[i]$. For other tokens, these two dimensions are filled with 0. In $\mat{X}[1]$ we store the prompt token. In $\mat{X}[2]$ and $\mat{X}[3]$ we store the input root vector $\mat{r}$ duplicately. We store the same token twice so that we can tackle $\mat{r}_l$ and $\mat{r}_r$ separately. To separate this two token, we use the last dimension, which was padded as 0 in $\mat{r}$, to distinguish them. $\mat{X}[2,5]$ is set as 0 and $\mat{X}[3,5]$ is set as 1. From $\mat{X}[4]$ to $\mat{X}[3+n]$, we store $\mat{X}_p$. From $\mat{X}[4+n]$ to $\mat{X}[3+n+n']$, we store $\mat{X}_b$. For all node indices of pattern tree, we add them by 3. For all node indices of base tree, we add them by 3+n, so that the indices can be applied to directly retrieve the positional embeddings.
    \begin{algorithm}[!htb]
  \caption{Logic circuit for MLP of the second Transformer layer}
  \label{alg:lc}  
  \begin{algorithmic}[1]
    \REQUIRE Input feature $\mat{x}''\in\mathbb{R}^{42}$
    \ENSURE Output feature $\mat{y}\in\mathbb{R}^7$
    \STATE $\mat{y}\leftarrow \mat{x}''[1:7]$ \COMMENT{Initialize $\mat{y}$}
    \IF{$\mat{x}''[1:2]==00$ or $\mat{x}''[1:2]==10$\COMMENT{Prompt Token or Pattern Tree Node}}
    
    \STATE \textbf{Return $\mat{y}$}
    \ELSIF{$\mat{x}''[1:2]==01$ \COMMENT{Root Vector Token}}
    \IF{$\mat{x}''[24:26]==001$\COMMENT{Prompt is $d$}}
    
    \IF{$\mat{x}''[5]==0$}
    \STATE $\mat{y}[3]\leftarrow \mat{x}''[10]$ \COMMENT{get $\mat{r}_l$, similar as line 1 in Alg. \ref{alg:d}}
    \ELSIF{$\mat{x}''[5]==1$}
    \STATE $\mat{y}[3]\leftarrow \mat{x}''[11]$ \COMMENT{get $\mat{r}_r$, similar as line 2 in Alg. \ref{alg:d}}
    \ENDIF
    \ENDIF
    \ELSIF{$\mat{x}''[1:2]==11$ \COMMENT{Base Tree Node Token}}
    \IF{$\mat{x}''[24:26]==010$\COMMENT{Prompt is $t$}}
    \IF{$\mat{x}''[40]==\mat{x}''[5]$\COMMENT{Line 6 in Alg. \ref{alg:t}}}
    \STATE $\mat{y}[5]\leftarrow 1$
    \ELSE
    \STATE $\mat{y}[5]\leftarrow 0$
    \ENDIF
    \ELSIF{$\mat{x}''[24:26]==100$\COMMENT{Prompt is $m$}}
    \IF{$\mat{x}''[13]==1$ and $\mat{x}''[21]==1$ \COMMENT{Line 7 in Alg. \ref{alg:m}}}
    \STATE $\mat{y}[5]\leftarrow 1$
    \ELSIF{$\mat{x}''[14]==1$ and $\mat{x}''[20]==1$\COMMENT{Line 9 in Alg. \ref{alg:m}}}
    \STATE $\mat{y}[5]\leftarrow 1$
    \ELSE
    \STATE $\mat{y}[5]\leftarrow 0$
    \ENDIF
    \ENDIF
    
    \ENDIF
    
  \end{algorithmic}  
\end{algorithm} 
    After preparing the inputs, we start to construct our Transformer. The transformer first attach the position index for each token (positional embedding). After that, the inputs are forwarded into a transformer with depth of 2. Each transformer layer contains a multi-head attention layer followed by a MLP. As proved by \citep{merrill2023expresssive,feng2023towards}, the attention layer of Transformer can retrieve the feature of tokens whose positional embeddings satisfy specific conditions. For multi-head attention, different heads can retrieve tokens with different conditions. In the following construction, we will use this conclusion to construct attention heads with different functions. 
    
    In the first Transformer layer, the function of each attention head is defined as:
    \begin{itemize}
        \item Head 1 only attends to the token itself to store $X[i]$ for token $i$.
        \item Head 2 attends to the token with a positional embedding matches the $\mat{X}[i,3]$ and copy this token's 5-dimension feature. For tree node tokens, this head's job is to retrieve the feature of $\mat{X}[i]$'s left child. For root vector tokens, this head's job is to retrieve the feature of pattern tree root node. For the first token (prompt token), this head's retrieved feature will not be applied in the afterwards layers and thus does not influence the correctness of the model.
        \item Similar as Head 2, Head 3 attends to the token with a positional embedding matches the $\mat{X}[i,4]$ and copy this token's 5-dimension feature. This head's job is to retrieve the feature of $\mat{X}[i]$'s right child. For root vector tokens, this head's job is to retrieve the feature of base tree root node.
        \item Head 4 attends to the first token (prompt token) and copy this token's 7-dimension feature. This head's job is to retrieve the prompt indicator.
        \item Head 5 attends to the second token (root token) and copy this token's 7-dimension feature. This head's job is to retrieve the root information.
    \end{itemize}
    With the above 5 heads, the attention layer will output a 35-dimension feature for each token. We denote these features as $\mat{X}'\in\mathbb{R}^{(3+n+n')\times35}$. After that, these features are forwarded into a MLP fitting identical mapping to acquire the input features for the second Transformer layer.

    In the second Transformer layer, the function of each attention head is defined as:
    \begin{itemize}
        \item Head 1 only attends to the token itself to store $X'[i]$ for token $i$.
        \item Head 2 attends to the token with a positional embedding matches the $\mat{X}'[i,31]$ and copy this token's 1-7 dimension features ($\mat{X}'[\mat{X}'[i,31],1:7]$). This head's job is to broadcast the feature of the pattern tree root node to every token.
        
    \end{itemize}
    With the above 2 heads, the attention layer will output a 42-dimension feature for each token. We denote these features as $\mat{X}''\in\mathbb{R}^{(3+n+n')\times42}$. 
    For root vector token, only the features from head 1 and head 4 are useful. For base tree node tokens, all 42 dimensions are useful. Then each token's feature are parallely forwarded into a MLP. We will use this MLP to fit the logical circuit described in Alg. \ref{alg:lc}. The function of Alg. \ref{alg:lc} is to aggregate the functions of $m(\cdot),t(\cdot),d(\cdot)$ together and assign the correct value based on the prompt indicator. In Alg. \ref{alg:lc}, all operations are AND, OR, NOT, SELECTOR, and ASSIGN and there is not loop. Thus, it is a static logical circuit and can be implemented with multi-fan-in AND, OR, NOT gates. Thus, it can be precisely simulated by a MLP according to our Lemma \ref{lemma:1}.

    After acquiring the $\mat{y}\in\mathbb{R}^7$ for each token, we can organize them as a feature sequence $\mat{Y}\in\mathbb{R}^{(3+n+n')\times7}$. When the prompt is $d$, we return $\mat{Y}[2,3:5]$ as $\mat{r}_l$ and $\mat{Y}[3,3:5]$ as $\mat{r}_r$. If the prompt is $t$ or $m$, then we can output $\mat{Y}[3+n+1:3+n+n',5]$ as the expected $\mat{v}$.

\end{proof}

\subsection{Justification to Proposition \ref{prop.window}}
Suppose that the LLM is auto-regressively decoding $n$ tokens from an input context window with length of $C$. Then the decoding window of the $i$-th token is $C+i-1$. Thus, the average window size will be:
\begin{equation}
    \frac{\sum_{i=1}^n (C+i-1)}{n} = \frac{C+n-1}{2}
\end{equation}
Thus, when we sequentially decode all the sub-task resolutions, the total length of the decoded sequence will be $\sum_{i=1}^kr_i$. Thus the average window size will be:
\begin{equation}
    C+\frac{\sum_{i=1}^kr_i-1}{2}
\end{equation}
Meanwhile, when we apply Divide-and-Conquer, we parallely decode each sub-task's resolution. Thus, for each sub-task, total window size will be $C\sum_{j=1}^kr_j+\sum_{i=1}^k\frac{(r_i-1)r_i}{2}$. Thus the average window size will be $C+\sum_{i=1}^k\frac{(r_i-1)r_i}{2\sum_{j=1}^kr_j}$.
Meanwhile, with Jensen inequility, we have:
\begin{equation}
    \sum_{i=1}^k(r_i-1)r_i<\sum_{i=1}^k(r_i-0.5)^2\leq (\sum_{i=1}^k(r_i-0.5))^2\leq (\sum_{i=1}^kr_i-0.5k)^2
\end{equation}
Thus, when $k\geq2$, we have:
\begin{equation}
    \sum_{i=1}^k(r_i-1)r_i< (\sum_{i=1}^kr_i-1)^2
\end{equation}

Thus, we have:
\begin{equation}
    C+\sum_{i=1}^k\frac{(r_i-1)^2}{2\sum_{j=1}^kr_j}<C+\frac{\sum_{i=1}^kr_i-1}{2}
\end{equation}
\subsection{Prompting Details of DaC}

\textbf{Multiplication of Long Integers:} Suppose we have two $2n$-digit numbers $AB$ and $CD$, where $A,B,C,D$ are all $n$-digit numbers. Then we can break $AB\times CD$ as $(A\times C\times 10^{2n})+(A\times D\times 10^n)+(B\times C\times 10^n)+(B\times D)$, where the calculation in each bracket pair is disjoint with others bracket pairs. We only need to compute the results of multiplication in each bracket pair parallelly and then merge all of them with addition:

\textit{Decomposer Prompt $d$}: Please split the string a from the middle as two separated strings. The lengths of the two separated strings should be as close as possible. Please only return the two strings separated by a comma and do not return anything else.

\textit{Sub-task Tackling Prompt $t$}: (1)Please compute $a*b$. (2) Please only return the final results and do not return anything else (ensure disentangled-sub-process principle).

\textit{Merge Prompt $m$}: Please compute $x=a*10^{2n}+b*10^{n}$ and $y=c*10^{n}+d$. Based on the above calculation, please compute $x+y$ carefully step by step.

\textbf{Hallucination Detection in Long Context}:
We divide the summary to sentences. After that, we paralelly verify the sentences. Finally, we merge the verification to each sentence:

\textit{Decomposer Prompt $d$}: Please help me segment the following paragraph as sentences. The separated sentence should be output as: \#Statement 1\#: ...\#Statement 2\#: ...Do not say anything else. Just return the statements in the given format.\\ Paragraph

\textit{Sub-task Tackling Prompt $t$}: I want you to act as a factual contradiction checker. You are given a set of statements and a document. Among the statements, there might be one or more statement that contains contradictions with the document. Please find the problematic statement if it exist by analyzing the statements one by one. For each statement, please make a choice:
\begin{itemize}
    \item A: The statement is totally aligned with the document for sure.
    \item B: The statement contradicts with the document.
 
\end{itemize}

\textit{Merge Prompt $m$}: Based on the above analysis, please tell me, does any statement above contain contradiction with the document?.

\textbf{Fact-Verification for Misinformation Detection}:
Similar as hallucination detection, we divide the summary to sentences. After that, we paralelly verify the sentences. Finally, we merge the verification to each sentence. Thus, our decomposer prompt and sub-task tackling prompt are the same as hallucination detection. The only difference is the merge prompt.

\textit{Merge Prompt $m$}: If we connect the above statements to be a news article, based on the above analyzation, please answer me: Is there any contradiction between the document and the article?\\
\begin{wrapfigure}{r}{0.53\textwidth}
\centering
    \includegraphics[width=0.4\textwidth]{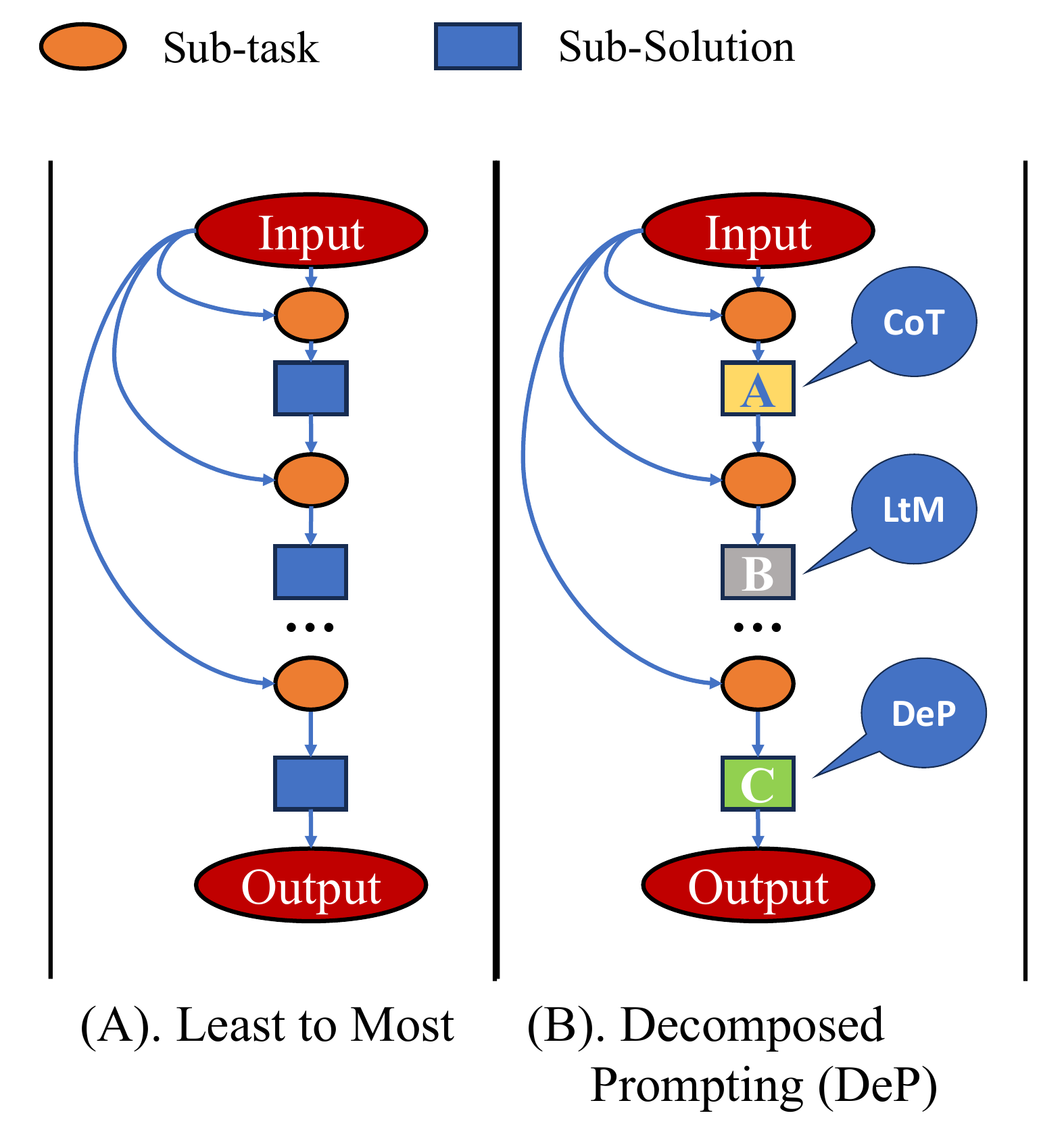}
    \caption{Comparison of Least-to-Most (LtM) Prompting and Decomposed Prompting (DeP).}
    \label{fig:ltm}
\end{wrapfigure}

\subsection{Decomposed Prompting and Least to Most}

Least-to-Most (LtM) Prompting \citep{zhou2022least} and Decomposed Prompting \citep{khot2022decomposed} are two similar works to our work. They both propose to explicitly prompt the LLM to decompose the task as a series of sub-tasks and sequentially tackle them. In Fig .\ref{fig:compare}, we merge these two methods. Here, we will provide more detailed comparison of them, which is shown in Fig. \ref{fig:ltm}. Decomposed Prompting can regarded as a upgraded version of LtM. It introduces special notations into the prompt to represent program states so that when sequentially tackling the sub-tasks, it can call heterogeneous modules to tackle them. Such design enable the LLM to call external programs (e.g., retrieval documents on WikiPedia and program based calculator) and/or itself (i.e., recursion). Such design endows it stronger expressive power and increases the compositional generalization ability of LLMs in different areas, such as symbolic manipulation and multi-hop QA \citep{khot2022decomposed}. Also, it endows LLM the ability to do open-domain QA by retrieving from external knowledge base.

\subsection{Typical Tasks that Satisfy and Dissatisfy the Proposed Conditions}
\label{app.list}
To better assist the prompt engineering on different tasks, we list the typical tasks that satisfy and dissatisfy the proposed conditions. In common tasks, the following tasks satisfy the proposed conditions. For such tasks, searching good decomposition prompt for DaC is likely to be helpful for the performance:
\begin{enumerate}
    \item \textbf{Multiplication}
    \item \textbf{Fact Verification on Long Text}
    \item \textbf{Auto Evaluation on Long Text}
    \item \textbf{Article-level Summary}
\end{enumerate}
The following tasks typically do not satisfy the proposed conditions. For such tasks, searching good decomposition prompt for DaC is not very likely to be helpful for the performance:
\begin{enumerate}
    \item \textbf{Addition}: It is too simple and violate the condition 1
    \item \textbf{Division}: It does not contain parallel sub-tasks, thus violate condition 2
    \item \textbf{Multi-Round Question-Answering}: It is a typical sequential task, thus violate condition 2
    \item \textbf{Planning}:  It is a typical sequential task, thus violate condition 2
\end{enumerate}

\subsection{More Discussions on Sequential Sub-task Tackling and Parallel Sub-task Tackling }
\label{app.seq-parallel}
\begin{figure}[!htb]
    \centering
    \includegraphics[width=\textwidth]{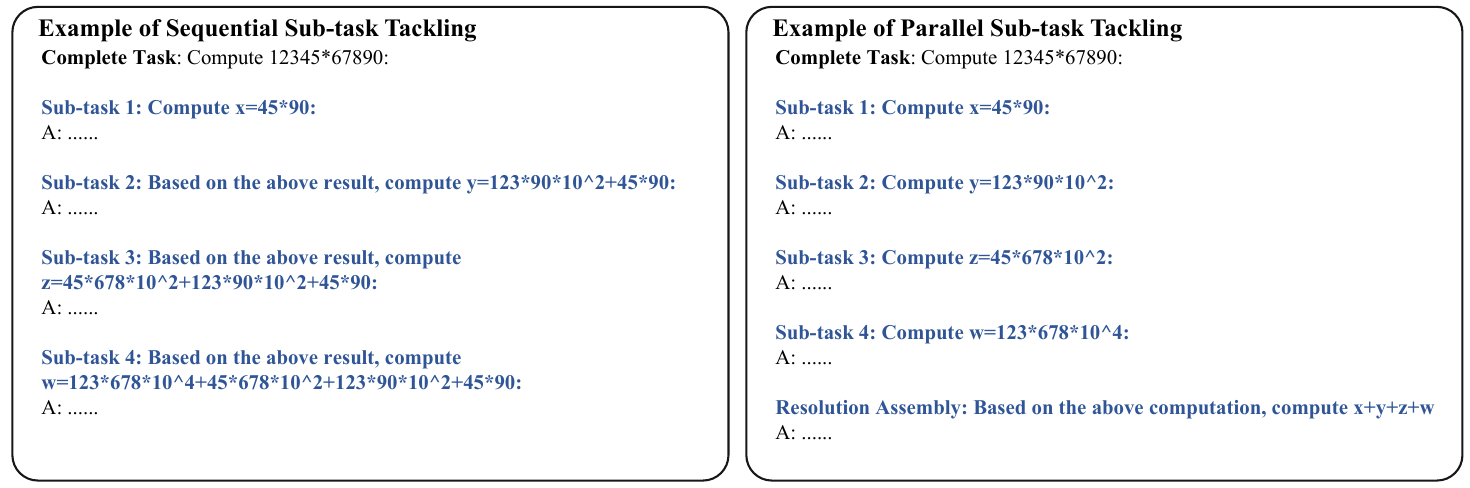}
    \caption{Toy example of Sequential Sub-task Tackling and Parallel Sub-task Tackling in long integer multiplication}
    \label{fig:seq-parallel-1}
\end{figure}

\begin{figure}[!htb]
    \centering
    \includegraphics[width=\textwidth]{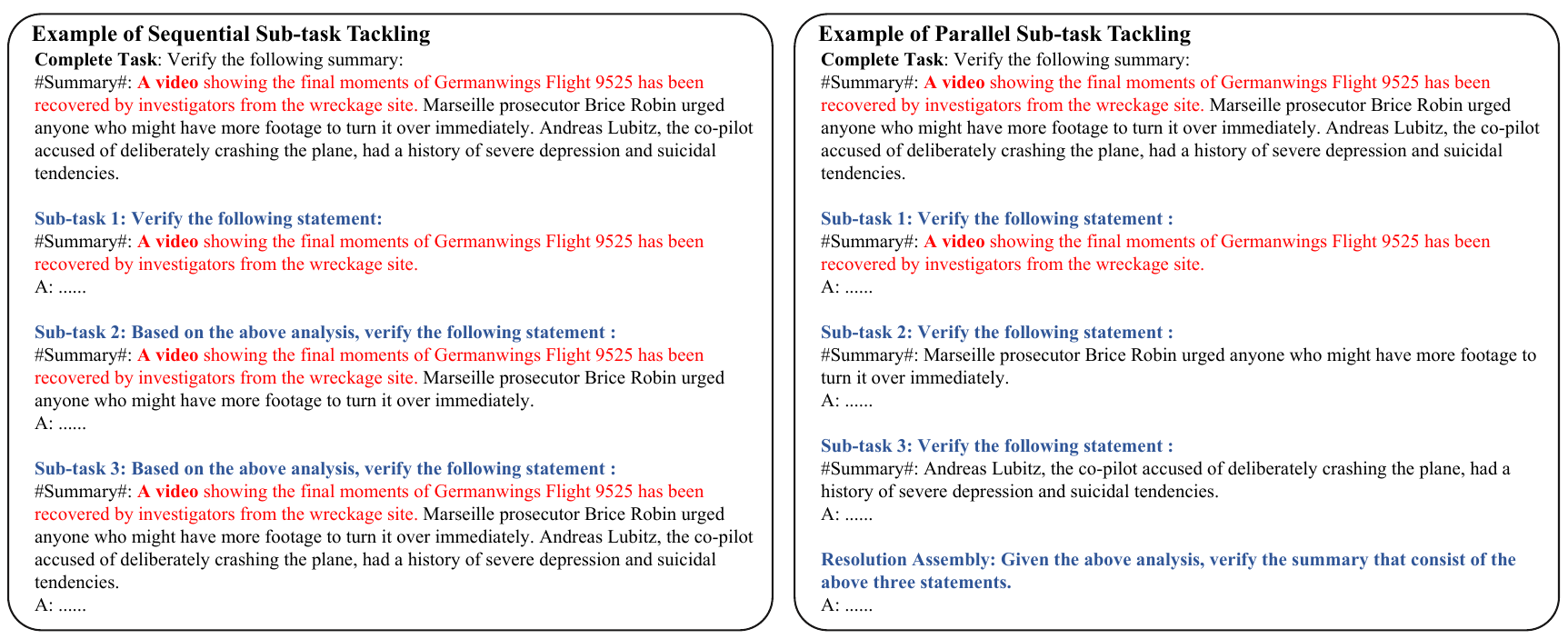}
    \caption{Toy example of Sequential Sub-task Tackling and Parallel Sub-task Tackling in hallucination detection}
    \label{fig:seq-parallel-2}
\end{figure}
Sequential Sub-task Tackling and Parallel Sub-task Tackling are two different paradigm in decomposing complex tasks as sub-task to tackle. The first one decompose a complex tasks as a series of sub-tasks. In this series, each sub-task relies on the previous one's output as input or context. The second one decompose a complex tasks as a set of sub-tasks, each of which does not rely on others. Two examples for multiplication and hallucination detection are provided in Fig \ref{fig:seq-parallel-1} and \ref{fig:seq-parallel-2}

\end{document}